\DeclareFontFamily{OT1}{pzc}{}
\DeclareFontShape{OT1}{pzc}{m}{it}{<-> s * [1.10] pzcmi7t}{}
\DeclareMathAlphabet{\mathpzc}{OT1}{pzc}{m}{it}
\newtheorem{theorem}{Theorem}
\newtheorem{corollary}{Corollary}
\newtheorem{proposition}{Proposition}
\newtheorem{lemma}[theorem]{Lemma}
\theoremstyle{definition}
\newcommand{\bE}{\boldsymbol{\mathrm{E}}}
\newcommand{\bX}{\boldsymbol{\mathrm{X}}}
\newcommand{\bx}{\boldsymbol{x}}
\newcommand{\by}{\boldsymbol{y}}
\newcommand{\bw}{\boldsymbol{\mathrm{w}}}
\newcommand{\bI}{\boldsymbol{\mathrm{I}}}
\newcommand{\bmu}{\boldsymbol{\mu}}
\newcommand{\bPhi}{\boldsymbol{\mathrm{\Phi}}}
\DeclareMathOperator*{\argmax}{arg\,max}
\newcommand{\look}[1]{{\textcolor{Red}{ #1 }}}
\title{Cold Posteriors through PAC-Bayes}
\author{
Konstantinos Pitas, Julyan Arbel\\ 
Univ. Grenoble Alpes, Inria, CNRS, Grenoble INP, LJK, 38000 Grenoble, France\\
\texttt{pitas.konstantinos@inria.fr,julyan.arbel@inria.fr}\\
}
\begin{document}

\maketitle

\begin{abstract}
We investigate the cold posterior effect through the lens of PAC-Bayes generalization bounds. We argue that in the non-asymptotic setting, when the number of training samples is (relatively) small, discussions of the cold posterior effect should take into account that approximate Bayesian inference does not readily provide guarantees of performance on out-of-sample data. Instead, out-of-sample error is better described through a generalization bound. In this context, we explore the connections of the ELBO objective from variational inference and the PAC-Bayes objectives. We note that, while the ELBO and PAC-Bayes objectives are similar, the latter objectives naturally contain a temperature parameter $\lambda$ which is not restricted to be $\lambda=1$. For both regression and classification tasks, in the case of isotropic Laplace approximations to the posterior,  we show how this PAC-Bayesian interpretation of the temperature parameter captures the cold posterior effect.
\end{abstract}

\section{Introduction}

\begin{figure*}[h!]
\centering
\begin{subfigure}{.25\textwidth}
  \centering
  \includegraphics[width=\textwidth]{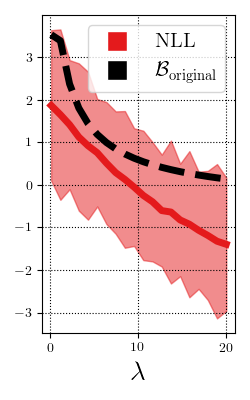}
  \caption{UCI, Abalone}
  \label{intro_fig:figure1}
\end{subfigure}%
\begin{subfigure}{.25\textwidth}
  \centering
  \includegraphics[width=\textwidth]{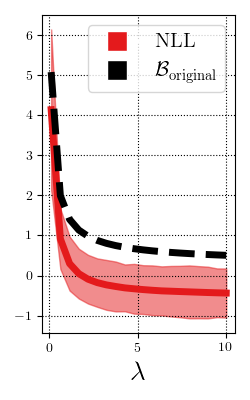}
  \caption{UCI, Diamonds}
  \label{intro_fig:figure2}
\end{subfigure}%
\begin{subfigure}{.25\textwidth}
  \centering
  \includegraphics[width=\textwidth]{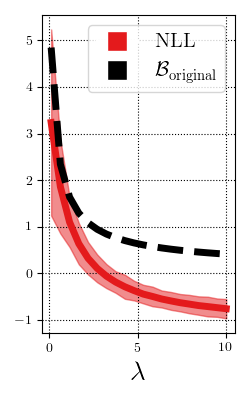}
  \caption{KC\_House}
  \label{intro_fig:figure3}
\end{subfigure}%
\begin{subfigure}{.25\textwidth}
  \centering
  \includegraphics[width=\textwidth]{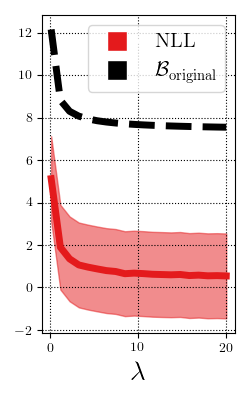}
  \caption{MNIST-10}
  \label{intro_fig:figure4}
\end{subfigure}%
\caption{The $\mathcal{B}_{\mathrm{original}}$ PAC-Bayes bound and the test negative log likelihood for different values of $\lambda$ (quantities on the y-axis are normalized). (a-c) are regression tasks on the UCI Abalone, UCI Diamonds and KC\_House datasets, while (d) is a classification task on the MNIST-10 dataset. $\mathcal{B}_{\mathrm{original}}$ PAC-Bayes bound closely tracks the test negative log-likelihood.}
  \label{intro_fig:figure_full}
\end{figure*}

In their influential paper, \citet{wenzel2020good} highlighted the observation that Bayesian neural networks typically exhibit better test time predictive performance if the posterior distribution is ``sharpened'' through tempering. Their work has been influential primary because it serves as a well documented example of the potential drawbacks of the Bayesian approach to deep learning. While other subfields of deep learning have seen rapid adoption, and have had impact on real world problems, Bayesian deep learning has, to date, seen relatively little practical use \cite{izmailov2021bayesian,lotfi2022bayesian,dusenberry2020efficient,wenzel2020good}. The ``cold posterior effect'', as the authors of \citet{wenzel2020good} named their observation, highlights an important mismatch between Bayesian theory and practice. As we increase the number of training samples, Bayesian theory tells us that we should be concentrating more and more on the true model parameters, in a frequentist sense. At any given moment, the posterior is our best guess at what the true model parameters are, without having to resort to heuristics. Since the original paper, a number of works \cite{noci2021disentangling,zeno2020cold,adlam2020cold,nabarro2021data,fortuin2021bayesian,aitchison2020statistical} have tried to explain the cold posterior effect, identify its origins, propose remedies and defend Bayesian deep leaning in the process. 

The experimental setups where the cold posterior effect arises have, however, been hard to pinpoint precisely. In \citet{noci2021disentangling} the authors conducted detailed experiments testing various hypotheses. The cold posterior effect was shown to arise from augmenting the data during optimization (data augmentation hypothesis), from selecting only the ``easiest'' data samples when constructing the dataset (data curation hypothesis), and from selecting a ``bad'' prior (prior misspecification hypothesis). In \citet{nabarro2021data}  the authors propose a principled log-likelihood that incorporates data augmentation, however they show that the cold-posterior persists. Data curation was first proposed as an explanation in \citet{aitchison2020statistical}, however the authors show that data curation can only explain a part of the cold posterior effect. Misspecified priors  have also been explored as a possible cause in several other works \cite{zeno2020cold,adlam2020cold,fortuin2021bayesian}. Again the results have been mixed. In smaller models, data dependent priors seem to decrease the cold posterior effect while in larger models the effect increases \cite{fortuin2021bayesian}. 

We propose that discussions of the cold posterior effect should take into account that in the \textit{non-asymptotic setting} (where the number of training data points is relatively small), Bayesian inference does not readily provide a guarantee for \textit{performance on out-of-sample data}. Existing theorems describe \emph{posterior contraction} \cite{ghosal2000convergence,blackwell1962merging}, however in practical settings, for a finite number of training steps and for finite training data, it is often difficult to \emph{precisely} characterise how much the posterior concentrates. Furthermore, theorems on posterior contraction are somewhat unsatisfying in the supervised classification setting, in which the cold posterior effect is usually discussed. Ideally, one would want a theoretical analysis that links the posterior distribution to the \emph{test} error directly. 

Here, we investigate PAC-Bayes generalization bounds \cite{mcallester1999some,catoni2007pac,alquier2016properties,dziugaite2017computing} as the model that governs performance on out-of-sample data. PAC-Bayes bounds describe the performance on out-of-sample data, through an application of the convex duality relation between measurable functions and probability measures. The convex duality relation naturally gives rise to the log-Laplace transform of a special random variable \cite{catoni2007pac}. Importantly the log-Laplace transform has a temperature parameter $\lambda$ which is not constrained to be $\lambda=1$. We investigate the relationship of this temperature parameter to cold posteriors. Our contributions are the following: 1) We prove a PAC-Bayes bound for linearized deep neural networks, that has a simple analytical form with respect to $\lambda$. This provides useful intuition for potential causes of the cold posterior effect. 2) For isotropic Laplace approximations to the posterior, for both regression and classification tasks, we show that a related PAC-Bayes bound correlates with performance on out-of-sample data. Our bounds are \emph{oracle} bounds, in that some quantities are typically unknown in real settings. We also rely on Monte Carlo sampling to estimate some quantities, and do not attempt to bound the error of these estimates. Even with these caveats we believe that our analysis highlights an important aspect of the cold posterior effect that has up to now been overlooked.

\section{The cold posterior effect in the misspecified and non-asymptotic setting}

We denote the learning sample $(X,Y)=\{(\bx_i,y_i)\}^n_{i=1}\in(\mathcal{X}\times\mathcal{Y})^n$, that contains $n$ input-output pairs. Observations $(X,Y)$ are assumed to be sampled randomly from a distribution $\mathcal{D}$. Thus, we denote $(X,Y)\sim\mathcal{D}^n$ the i.i.d observation of $n$ elements. We consider loss functions $\ell:\mathcal{F}\times\mathcal{X}\times\mathcal{Y}\rightarrow\mathbb{R}$, where $\mathcal{F}$ is a set of predictors $f:\mathcal{X}\rightarrow\mathcal{Y}$. We also denote the risk  $\mathcal{L}^{\ell}_{\mathcal{D}}(f)=\bE_{(\bx,y)\sim\mathcal{D}}\ell(f,\bx,y)$ and the empirical risk $\hat{\mathcal{L}}^{\ell}_{X,Y}(f)=(1/n)\sum_i\ell(f,\bx_i,y_i)$. We encounter cases where we make predictions using the posterior predictive distribution $\bE_{f\sim\hat{\rho}}[p(y|\bx,f)]$, with some abuse of notation we write the corresponding risk and empirical risk terms as $\mathcal{L}^{\ell}_{\mathcal{D}}(\bE_{f\sim\hat{\rho}})$ and $\hat{\mathcal{L}}^{\ell}_{X,Y}(\bE_{f\sim\hat{\rho}})$ correspondingly.

We will use two loss functions, the non-differentiable zero-one loss $\ell_{01}(f,\bx,y)=\mathbb{I}(\argmax(f(\bx))=y)$, and the negative log-likelihood, which is a commonly used differentiable surrogate $\ell_{\text{nll}}(f,\bx,y)=-\log(p(y|\bx,f))$, where we assume that the outputs of $f$ are normalized to form a probability distribution. Given the above, denoting the prior by $\pi$, the Evidence Lower Bound (ELBO) has the following form
\begin{equation}\label{generalized_obj}
-\bE_{f\sim\hat{\rho}}\hat{\mathcal{L}}^{\ell_{\mathrm{nll}}}_{X,Y}(f)-\frac{1}{\lambda n}\mathrm{KL}(\hat{\rho}\Vert \pi),
\end{equation}
where $\lambda=1$. Note that our temperature parameter $\lambda$ is the \emph{inverse} of the one typically used in cold posterior papers. In this form $\lambda$ has a clearer interpretation as the temperature of a log-Laplace transform. There is a slight ambiguity between tempered and cold posteriors, as argued in \citet{aitchison2020statistical} for Gaussian priors and posteriors the two objectives are equivalent. Overall our setup is equivalent to the one in \citet{wenzel2020good}. One typically models the posterior and prior distributions over weights using a parametric distribution (commonly a Gaussian) and optimizes the ELBO, using the reparametrization trick, to find the posterior distribution \cite{blundell2015weight,khan2018fast,mishkin2018slang,ashukha2019pitfalls,wenzel2020good}. The cold posterior is the following observation: 
\begin{quote}
\emph{Even though the ELBO has the form \ref{generalized_obj} with $\lambda = 1$, practitioners have found that much larger values $\lambda\gg1$ typically result in worse test time performance, for example a higher test misclassification rate and higher test negative log-likelihood.}
\end{quote} 
%

The starting point of our discussion will be thus to define the quantity that we care about in the context of Bayesian deep neural networks and cold posterior analyses. Concretely, in the setting of supervised prediction, what we often try to minimize is
\begin{equation}
    \label{eq:eval-metric-classif}
    \mathrm{KL}(p_{\mathcal{D}}(y|\bx)\Vert \bE_{f\sim\hat{\rho}}[p(y|\bx,f)])=\bE_{\bx,y\sim\mathcal{D}}\left[\ln\frac{p_{\mathcal{D}}(y|\bx)}{\bE_{f\sim\hat{\rho}}[p(y|\bx,f)]}\right],
\end{equation} 
the conditional relative entropy \citep{cover1999elements} between the true conditional distribution $p_{\mathcal{D}}(y|\bx)$ and $\bE_{f\sim\hat{\rho}}[p(y|\bx,f)]$ the posterior predictive distribution. For example, this is implicitly the quantity that we minimize when optimizing classifiers using the cross-entropy loss \cite{masegosa2019learning,morningstar2022pacm}. It determines how accurately
we can predict the future, it is often what governs how
much money our model will make or how many lives
it will save. It is also on this and similar predictive metrics that the cold posterior appears. In the following we will outline the relationship between the ELBO, PAC-Bayes and \ref{eq:eval-metric-classif}.  

\subsection{ELBO}

We assume a training sample $(X,Y)\sim\mathcal{D}^n$ as before, denote $p(\bw|X,Y)$ the true posterior probability over predictors $f$ parameterized by $\bw$ (typically weights for neural networks), and $\pi$ and $\hat{\rho}$ respectively the prior and variational posterior distributions as before. The ELBO results from the following calculations
\begin{equation*}
\begin{split}
\mathrm{KL}(\hat{\rho}(\bw)\Vert p(\bw|X,Y))&=\int \hat{\rho}(\bw) \ln \frac{\hat{\rho}(\bw)}{p(\bw|X,Y)}d \bw=\int \hat{\rho}(\bw) \ln \frac{\hat{\rho}(\bw)p(Y|X)}{\pi(\bw) p(Y|X,\bw)}d \bw\\
&=\int \hat{\rho}(\bw) 
\left[
- \ln p(Y|X,\bw) 
+ \ln \frac{\hat{\rho}(\bw)}{\pi(\bw)}
+  \ln p(Y|X) 
\right]
d \bw\\
&=-n\textcolor{orange}{\underbrace{\left(-\bE_{f\sim\hat{\rho}}\hat{\mathcal{L}}_{X,Y}^{\ell_{\mathrm{nll}}}(f)-\frac{1}{n}\mathrm{KL}(\hat{\rho}\Vert \pi)\right)}_{\mathrm{ELBO}}}+\ln p(Y|X).
\end{split}
\end{equation*}
Thus, maximizing the ELBO can be seen as minimizing the KL divergence between the true posterior and the variational posterior over the weights $\mathrm{KL}(\hat{\rho}(\bw)\Vert p(\bw|X,Y))$. The true posterior distribution $p(\bw|X,Y)$ gives more probability mass to predictors which are more likely given the training data, however these predictors do not necessarily minimize $\mathrm{KL}(p_{\mathcal{D}}(y|\bx)\Vert \bE_{f\sim\hat{\rho}}[p(y|\bx,f)])$, the evaluation metric of choice \eqref{eq:eval-metric-classif} for supervised classification. 

It is well known that Bayesian inference is strongly consistent under very broad conditions \cite{ghosal2000convergence}.
For example let the set of predictors $\mathcal{F}$ be countable and suppose that the data distribution $\mathcal{D}$ is such that, for some $f^*\in\mathcal{F}$ we have that $p(y|\bx,f^*)$ is equal to $p_{\mathcal{D}}(y|\bx)$ the true conditional distribution of $y$ given $\bx$. Then the Blackwell--Dubins consistency theorem \cite{blackwell1962merging} implies that with $\mathcal{D}$-probability 1, the Bayesian posterior concentrates on $f^*$. 
In supervised classification methods such as SVMs, the number of parameters is typically much smaller than the number of samples. In this situation, it is reasonable to assume that we are operating in the regime where $n \rightarrow \infty$ and that the posterior quickly concentrates on the true set of parameters. In such cases, a more detailed analysis, such as a PAC-Bayesian one, is unnecessary as the posterior is akin to a Dirac delta mass at the true parameters. However neural networks do not operate in this regime. In particular they are heavily overparametrized such that Bayesian model averaging always occurs empirically. In such cases,  it is often difficult to \emph{precisely} characterise how much the posterior concentrates. Furthermore, ideally, one would want a theoretical analysis that links the posterior distribution to the \emph{test} error directly. 

There is a more subtle cause that undermines consistency theorems that have worked well in the past, specifically \emph{model misspecification}. As shown in \citet{grunwald2007suboptimal}, for the case of supervised classification, there are two cases of misspecification where the Bayesian posterior does not concentrate to the optimal distribution with respect to the true risk even with infinite training data $n \rightarrow \infty$. 1) Assuming homoskedastic noise in the likelihood, when some data samples are corrupted with higher level noise than others. 2) The set of all predictors $\mathcal{F}$ does not include the true predictor $f^*$.
Both types of misspecification probably occur for deep neural networks. For example, accounting for heteroskedastic noise \cite{collier2021correlated} improves performance on some classification benchmarks. 
And the existence of multiple minima is a clue that no single best parametrization exists \cite{masegosa2019learning}. 

Operating in the regime where $n$ is (comparatively) small and where $f^* \notin \mathcal{F}$ makes it important to derive a more precise certificate of generalization through a generalization bound, which directly bounds the true risk. In the following we focus on analyzing a PAC-Bayes bound in order to obtain insights into when the cold posterior effect occurs.

\subsection{PAC-Bayes} 

We first look at the following bound, that we name the original bound and denote it by $\mathcal{B}_{\mathrm{original}}$.
\begin{theorem}[$\mathcal{B}_{\mathrm{original}}$, \cite{alquier2016properties}]\label{th:alquier}
Given a distribution $\mathcal{D}$ over $\mathcal{X}\times\mathcal{Y}$, a hypothesis set $\mathcal{F}$, a loss function $\ell:\mathcal{F}\times\mathcal{X}\times\mathcal{Y}\rightarrow\mathbb{R}$, a prior distribution $\pi$ over $\mathcal{F}$, real numbers $\delta \in (0,1]$ and $\lambda>0$, with probability at least $1-\delta$ over the choice $(X,Y)\sim\mathcal{D}^n$, we have for all $\hat{\rho}$ on $\mathcal{F}$
\begin{equation*}
\bE_{f\sim\hat{\rho}}\mathcal{L}_{\mathcal{D}}^{\ell}(f)\leq \bE_{f\sim\hat{\rho}}\hat{\mathcal{L}}_{X,Y}^{\ell}(f)+\frac{1}{\lambda n}\left[\mathrm{KL}(\hat{\rho}\Vert \pi)+\ln\frac{1}{\delta}   + \Psi_{\ell,\pi,\mathcal{D}}(\lambda,n) \right]
\end{equation*}
\begin{equation*}
\textit{where} \; \Psi_{\ell,\pi,\mathcal{D}}(\lambda,n)=\ln \bE_{f\sim\pi}\bE_{X',Y'\sim\mathcal{D}^n}\exp \left[\lambda n\left(\mathcal{L}_{\mathcal{D}}^{\ell}(f)- \hat{\mathcal{L}}_{X',Y'}^{\ell}(f)\right)  \right].
\end{equation*}
\end{theorem}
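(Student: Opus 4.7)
The plan is to prove this bound via the standard two-step PAC-Bayes recipe: first, a change of measure inequality that converts an expectation under $\hat{\rho}$ into one under $\pi$ at the price of a KL term; second, a Markov-type argument that turns the resulting sample-dependent exponential moment into the deterministic quantity $\Psi_{\ell,\pi,\mathcal{D}}(\lambda,n)$ with high probability over $(X,Y)$.

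For the first step, I would apply the Donsker--Varadhan variational identity (equivalently, the Gibbs change of measure inequality): for any measurable $\phi:\mathcal{F}\to\mathbb{R}$ and any $\hat{\rho}$ absolutely continuous with respect to $\pi$,
\[\bE_{f\sim\hat{\rho}}[\phi(f)] \le \mathrm{KL}(\hat{\rho}\Vert\pi) + \ln \bE_{f\sim\pi}\exp[\phi(f)].\]
Taking $\phi(f) = \lambda n(\mathcal{L}_{\mathcal{D}}^{\ell}(f) - \hat{\mathcal{L}}_{X,Y}^{\ell}(f))$ yields, simultaneously for every $\hat{\rho}$,
\[\lambda n\,\bE_{f\sim\hat{\rho}}[\mathcal{L}_{\mathcal{D}}^{\ell}(f) - \hat{\mathcal{L}}_{X,Y}^{\ell}(f)] \le \mathrm{KL}(\hat{\rho}\Vert\pi) + \ln \bE_{f\sim\pi}\exp\!\bigl[\lambda n(\mathcal{L}_{\mathcal{D}}^{\ell}(f) - \hat{\mathcal{L}}_{X,Y}^{\ell}(f))\bigr].\]
Crucially, because this inequality is variational in $\hat{\rho}$, the ``for all $\hat{\rho}$'' quantifier in the theorem comes for free: no union bound over posteriors is needed.

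For the second step, I would define the nonnegative, $(X,Y)$-measurable random variable
\[Z(X,Y) = \bE_{f\sim\pi}\exp\!\bigl[\lambda n(\mathcal{L}_{\mathcal{D}}^{\ell}(f) - \hat{\mathcal{L}}_{X,Y}^{\ell}(f))\bigr],\]
and apply Markov's inequality: with probability at least $1-\delta$ over $(X,Y)\sim\mathcal{D}^n$, $Z(X,Y) \le \bE_{(X',Y')\sim\mathcal{D}^n}[Z(X',Y')]/\delta$. Using Fubini's theorem (justified by nonnegativity of the integrand) to swap the expectations over $\pi$ and the sample, the logarithm of this upper bound is exactly $\ln(1/\delta) + \Psi_{\ell,\pi,\mathcal{D}}(\lambda,n)$. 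Combining this with the previous display and dividing through by $\lambda n$ delivers the bound as stated.

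The proof has essentially no structural obstacle; the main subtleties are bookkeeping --- ensuring $\hat{\rho} \ll \pi$ (otherwise $\mathrm{KL}(\hat{\rho}\Vert\pi) = \infty$ and the bound is vacuously true) and verifying the conditions for Fubini. Note that no assumption on the loss $\ell$ (boundedness, sub-Gaussianity) is imposed at this stage: the entire price is pushed into $\Psi_{\ell,\pi,\mathcal{D}}(\lambda,n)$, which in concrete applications is controlled by a separate concentration argument (Hoeffding's lemma for $[0,1]$-valued losses gives $\Psi \le \lambda^2 n/8$, Bernstein-type assumptions yield sharper rates under variance control, and so on), and which is the quantity whose dependence on $\lambda$ drives the later analysis of the cold posterior effect.
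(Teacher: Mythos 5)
Your proposal is correct and follows essentially the same route as the paper's own proof: Donsker--Varadhan change of measure applied to $\phi(f)=\lambda n\left(\mathcal{L}_{\mathcal{D}}^{\ell}(f)-\hat{\mathcal{L}}_{X,Y}^{\ell}(f)\right)$, followed by Markov's inequality on the prior-averaged exponential moment (the paper writes the exponent with $\lambda$ in place of $\lambda n$ in its appendix, but this is only a rescaling of the temperature). Your additional remarks on Fubini and absolute continuity are fine bookkeeping points that the paper leaves implicit.
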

There are three different terms in the above bound:
\begin{equation*}
\textcolor{red}{\underbrace{\bE_{f\sim\hat{\rho}}\hat{\mathcal{L}}_{X,Y}^{\ell}(f)}_{\text{Empirical Risk}}}+\textcolor{blue}{\underbrace{\frac{1}{\lambda n}\left[\mathrm{KL}(\hat{\rho}\Vert \pi)+\ln\frac{1}{\delta} \right]}_{\text{KL}}}   + \textcolor{Green}{\underbrace{\frac{1}{\lambda n}\Psi_{\ell,\pi,\mathcal{D}}(\lambda,n)}_{\text{Moment}}}.
\end{equation*} 
The empirical risk term is the empirical mean of the loss of the classifier over all training samples. The KL term is the complexity of the model, which in this case is measured as the KL-divergence between the posterior and prior distributions. The Moment term, this is the log-Laplace transform for a reversal of the temperature, we will keep the name ``Moment'' in the following.

Using a PAC-Bayes bound together with Jensen's inequality, one can bound \eqref{eq:eval-metric-classif} directly as follows
\begin{equation*}
\begin{split}
&\mathrm{KL}(p_{\mathcal{D}}(y|\bx)\Vert \bE_{f\sim\hat{\rho}}[p(y|\bx,f)])=\bE_{\bx,y\sim\mathcal{D}}\left[\ln\frac{p_{\mathcal{D}}(y|\bx)}{\bE_{f\sim\hat{\rho}}[p(y|\bx,f)]}\right]\\
&=\bE_{\bx,y\sim\mathcal{D}}[-\ln\bE_{f\sim\hat{\rho}}[p(y|\bx,f)]]+\bE_{\bx,y\sim\mathcal{D}}[\ln p_{\mathcal{D}}(y|\bx)]\\
&\leq\bE_{\bx,y\sim\mathcal{D}}[\bE_{f\sim\hat{\rho}}[-\ln p(y|\bx,f)]]+\bE_{\bx,y\sim\mathcal{D}}[\ln p_{\mathcal{D}}(y|\bx)]\\
&\leq\textcolor{orange}{\underbrace{\bE_{f\sim\hat{\rho}}\hat{\mathcal{L}}_{X,Y}^{\ell_{\mathrm{nll}}}(f)+\frac{1}{\lambda n}\left[\mathrm{KL}(\hat{\rho}\Vert \pi)+\ln\frac{1}{\delta}+ \Psi_{\ell_{\mathrm{nll}},\pi,\mathcal{D}}(\lambda,n) \right]}_{\mathrm{PAC}\text{-}\mathrm{Bayes} }}    +\bE_{\bx,y\sim\mathcal{D}}[\ln p_{\mathcal{D}}(y|\bx)].
\end{split}
\end{equation*}
%
The last line holds under the conditions of Theorem~\ref{th:alquier} and in particular with probability at least $1-\delta$ over the choice $(X,Y)\sim\mathcal{D}^n$. Notice here the presence of the temperature parameter $\lambda\geq0$, which needs not be $\lambda=1$. 
\begin{quote}
    \emph{In particular it is easy to see that maximizing the ELBO is equivalent to minimizing a PAC-Bayes bound for $\lambda=1$, which might not necessarily be optimal for a finite sample size. More specifically even for exact inference, where $\bE_{\bw\sim\hat{\rho}}[p(y|\bx,\bw)]|_{\hat{\rho}=p(\bw|X,Y)}=p(y|\bx,X,Y)$, the Bayesian posterior predictive distribution does not necessarily minimize $\mathrm{KL}(p_{\mathcal{D}}(y|\bx)\Vert \bE_{f\sim\hat{\rho}}[p(y|\bx,f)])$.}
\end{quote}

\subsection{Safe-Bayes and other relevant work}
We are not the first to discuss the relationship of Bayesian inference and PAC-Bayes, nor the connection to tempered posteriors, however we are the first investigate the relationship with the cold posterior effect in the context of deep learning. In \citet{germain2016pac} the authors where the first to find connections between PAC-Bayes and Bayesian inference. However they only investigate the case where $\lambda=1$. After identifying two sources of misspecification the authors in \citet{grunwald2007suboptimal} proposed a solution, through an approach which they named Safe-Bayes \cite{grunwald2012safe,grunwald2017inconsistency}. Safe-Bayes corresponds to finding a temperature parameter $\lambda$ for a generalized (tempered) posterior distribution with $\lambda$ possibly different than 1. The optimal value of $\lambda$ is found by taking a sequential view of Bayesian inference and minimizing a prequential risk, the risk of each new data sample given the previous ones. This results in a PAC-Bayes bound on the true risk, and is reminiscent of recent works in Bayesian inference and model selection such as \citet{lyle2020bayesian,ru2020speedy}. The analysis of \citet{grunwald2012safe,grunwald2017inconsistency} is restricted to the case where $\lambda<1$. By contrast we provide an analytical expression of the bound on true risk, given $\lambda$, and also numerically investigate the case of $\lambda>1$. Our analysis thus provides intuition regarding which parameters (for example the curvature $h$) might result in cold posteriors. 

\section{The effect of the temperature parameter $\lambda$ on the PAC-Bayes bound}
PAC-Bayes objectives are typically difficult to analyze theoretically. In the following we make a number of simplifying assumptions, thus making deep neural networks amenable to study.
We exploit the idea of a recent line of works \cite{zancato2020predicting,maddox2021fast,jacot2018neural,khan2019approximate} that have considered linearizations of deep neural networks, at some estimate $\bw_{\hat{\rho}}$, such that
\begin{equation}
    \label{eq:lin-NN}
    f_{\mathrm{lin}}(\bx;\bw)=f(\bx;\bw_{\hat{\rho}})+\nabla_{\bw}f(\bx;\bw_{\hat{\rho}})^{\top}(\bw-\bw_{\hat{\rho}})
\end{equation}
to derive theoretical results. Our approach is somewhat connected to the NTK \cite{jacot2018neural}, however it is much closer to \citet{zancato2020predicting,maddox2021fast,khan2019approximate} as we make no assumptions about infinite width. For appropriate modelling choices, we aim at deriving a bound for this linearized model. 

We adopt the linear form~\eqref{eq:lin-NN} together with the Gaussian likelihood with $\sigma=1$, yielding $\ell_{\mathrm{nll}}(\bw,\bx,y)=\frac{1}{2}\ln(2\pi)+\frac{1}{2}(y-f(\bx;\bw_{\hat{\rho}})-\nabla_{\bw}f(\bx;\bw_{\hat{\rho}})^{\top}(\bw-\bw_{\hat{\rho}}))^2$. We also make the following modeling choices
\begin{itemize}
\item Prior over weights: $\bw \sim \mathcal{N}(\bw_{\pi},\sigma_{\pi}^2\bI)$.
\item Gradient as Gaussian mixtures: $\nabla_{\bw}f(\bx;\bw_{\hat{\rho}})\sim\sum_{i=1}^k\phi_i\mathcal{N}(\bmu_i,\sigma_{\bx i}^2\bI)$; note that this assumption should be somewhat realistic for pretrained neural networks, in that multiple works have shown that gradients with respect to the training, at $\bw_{\hat{\rho}}$, set are clusterable \cite{zancato2020predicting}. 
\item Labeling function: $y = f(\bx;\bw_{\hat{\rho}})+\nabla_{\bw}f(\bx;\bw_{\hat{\rho}})^{\top}(\bw_*-\bw_{\hat{\rho}})+\epsilon$, where $\epsilon\sim\mathcal{N}(0,\sigma_{\epsilon}^2)$. 
\end{itemize}
Thus $y|\bx\sim\mathcal{N}(f(\bx;\bw_{\hat{\rho}})+\nabla_{\bw}f(\bx;\bw_{\hat{\rho}})^{\top}(\bw_*-\bw_{\hat{\rho}}),\sigma_{\epsilon}^2)$. The assumption that $\bw_*$ is close to $\bw_{\hat{\rho}}$ is quite strong, and we furthermore argued in the previous sections that no single $\bw$ is truly ``correct''. However we note that for fine-tuning tasks linearized neural networks work remarkably well \cite{maddox2021fast,deshpande2021linearized}. It is therefore at least somewhat reasonable to assume the above oracle labelling function, in that for deep learning architectures good $\bw$ that fit many datasets can be found close to $\bw_{\hat{\rho}}$ in practical settings. In any case as we will see later we will only look for some rough intuition from this analysis, and will resort to \ref{th:alquier} for practical settings.

We also assume that we have a deterministic estimate of the posterior weights $\bw_{\hat{\rho}}$ \emph{which we keep fixed}, and we model the posterior as $\hat{\rho}=\mathcal{N}(\bw_{\hat{\rho}},\sigma^2_{\hat{\rho}}(\lambda)\bI)$. Therefore estimating the posterior corresponds to estimating the variance $\sigma^2_{\hat{\rho}}(\lambda)$. This setting has been widely explored before in the literature as it coincides with the Laplace approximation to the posterior.
\begin{proposition}[$\mathcal{B}_{\mathrm{approximate}}$]\label{dnn_approximate}
With the above modeling choices, and given a distribution $\mathcal{D}$ over $\mathcal{X}\times\mathcal{Y}$, real numbers $\delta \in (0,1]$ and $\lambda \in (0,\frac{1}{c})$ with $c = 2 n\sigma^2_{\bx}\sigma^2_{\pi}$,
 with probability at least $1-\delta$ over the choice $(X,Y)\sim\mathcal{D}^n$, we have
\begin{equation*}
\begin{split}
&\bE_{\bw\sim\hat{\rho}}\mathcal{L}_{\mathcal{D}}^{\ell_{\mathrm{nll}}}(\bw)\\
& \qquad \leq \textcolor{red}{\underbrace{\frac{\Vert \by-f(\bX;\bw_{\hat{\rho}})\Vert ^2_2}{2n}+\left(\frac{\lambda h}{d}+\frac{1}{\sigma_{\pi}^2}\right)^{-1} \frac{h}{2n}+\frac{1}{2}\ln(2\pi) }_{\normalfont{\text{Empirical Risk}}}} + \textcolor{Green}{\underbrace{\frac{\sigma_{\bx}^2(\sigma_{\pi}^2d+\Vert \bw_*\Vert _2^2)}{1-2\lambda n \sigma_{\bx}^2\sigma_{\pi}^2}+\sigma_{\epsilon}^2}_{\normalfont{\text{Moment}}}}+\\
&\qquad\textcolor{blue}{\underbrace{\frac{1}{\lambda n} \left[\frac{1}{2}\left(\frac{d}{\sigma_{\pi}^2}  \frac{1}{\frac{\lambda h}{d}+\frac{1}{\sigma_{\pi}^2}} + \frac{1}{\sigma^2_{\pi}}\Vert \bw_{\hat{\rho}}-\bw_{\pi}\Vert ^2_2 -d-d \ln\frac{1}{\frac{\lambda h}{d}+\frac{1}{\sigma_{\pi}^2}}+d \ln\sigma_{\pi}^2 \right) +\ln\frac{1}{\delta}\right]}_{\normalfont{\text{KL}}}}
\end{split}
\end{equation*}
where $h = \sum_{i}\sum_{j}(\nabla_{\bw}f(\bx_i;\bw_{\hat{\rho}})_j)^2$ is the curvature parameter, and $\sigma_{\bx}^2=\sum_{j=1}^k\phi_j\sigma^2_{\bx j}$ is the posterior gradient variance.
\end{proposition}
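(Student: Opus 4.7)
My plan is to instantiate $\mathcal{B}_{\mathrm{original}}$ (Theorem~\ref{th:alquier}) with $\ell = \ell_{\mathrm{nll}}$ and the isotropic Gaussian posterior $\hat\rho = \mathcal{N}(\bw_{\hat\rho}, \sigma_{\hat\rho}^2(\lambda)\bI)$, evaluate each of its three terms (empirical risk, KL, moment) in closed form under the modelling assumptions of the proposition, and finally substitute $\sigma_{\hat\rho}^2(\lambda) = \bigl(\lambda h/d + 1/\sigma_\pi^2\bigr)^{-1}$. This particular variance is the natural choice because it is both the minimiser, in $\sigma_{\hat\rho}^2$, of the empirical-risk-plus-KL portion of the bound, and the precision of the isotropic Laplace approximation to the $\lambda$-tempered posterior that the proposition is designed to describe.

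For the empirical risk, the linearized per-sample loss is $\tfrac{1}{2}\ln(2\pi)+\tfrac{1}{2}(r_i - \bg_i^\top(\bw-\bw_{\hat\rho}))^2$ with $r_i = y_i - f(\bx_i;\bw_{\hat\rho})$ and $\bg_i = \nabla_{\bw}f(\bx_i;\bw_{\hat\rho})$. Taking $\bE_{\bw\sim\hat\rho}$ kills the linear term and yields $r_i^2 + \sigma_{\hat\rho}^2(\lambda)\Vert \bg_i\Vert_2^2$; averaging over $i$ and using $h = \sum_i \Vert \bg_i\Vert_2^2$ produces the claimed empirical-risk expression. The KL term is the textbook closed form for two isotropic Gaussians, and substituting $\sigma_{\hat\rho}^2(\lambda)$ reproduces the stated KL expression verbatim.

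The bulk of the work is the moment term $\Psi_{\ell_{\mathrm{nll}},\pi,\mathcal{D}}(\lambda,n)$. Using the labelling function, the residual inside $\ell_{\mathrm{nll}}$ simplifies to $\bg_i'^\top(\bw_*-\bw) + \epsilon_i'$, so, with $\bv = \bw_*-\bw$, one has $\mathcal{L}_\mathcal{D}^{\ell_{\mathrm{nll}}}(\bw) - \hat{\mathcal{L}}_{X',Y'}^{\ell_{\mathrm{nll}}}(\bw) = \tfrac{1}{2}\bE_{\bx,\epsilon}[(\bg^\top \bv+\epsilon)^2] - \tfrac{1}{2n}\sum_i (\bg_i'^\top \bv+\epsilon_i')^2$. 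The i.i.d.\ expectation over $(X',Y')$ inside $\Psi$ factorises into an $n$-fold product; conditionally on $\bg'$, $\bg'^\top \bv + \epsilon' \sim \mathcal{N}(\bg'^\top \bv, \sigma_\epsilon^2)$, so each factor is a standard Gaussian MGF of a squared linear form, evaluable in closed form. Averaging the result over the gradient mixture (only $\sigma_\bx^2 = \sum_j \phi_j \sigma_{\bx j}^2$ survives; the mixture means effectively get absorbed into the $\Vert \bw_*\Vert_2^2$ factor) and then over $\bw\sim\pi$ (which yields $\bE_\pi \Vert \bv\Vert_2^2 = \sigma_\pi^2 d + \Vert \bw_*\Vert_2^2$ when $\bw_\pi=0$), followed by taking $\ln(\cdot)/(\lambda n)$, gives the stated closed form. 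The condition $\lambda < 1/(2n\sigma_\bx^2\sigma_\pi^2)$ is precisely the positive-definiteness condition that keeps all these Gaussian integrals finite; it appears as the factor $1-2\lambda n\sigma_\bx^2\sigma_\pi^2$ in the denominator.

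The main obstacle I expect is this moment step: tracking the centering carefully inside the exponential, combining the gradient-mixture and prior expectations in the right order, and confirming that the polynomial cancellations collapse to the compact form $\sigma_\bx^2(\sigma_\pi^2 d + \Vert \bw_*\Vert_2^2)/(1-2\lambda n\sigma_\bx^2\sigma_\pi^2) + \sigma_\epsilon^2$ rather than a messier expression involving the mixture means and $\bw_\pi$. The empirical-risk and KL calculations, by contrast, are routine Gaussian algebra.
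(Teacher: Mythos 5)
Your treatment of the empirical risk and KL terms, and the substitution of the minimizing variance $\sigma_{\hat\rho}^2(\lambda)=\bigl(\tfrac{\lambda h}{d}+\tfrac{1}{\sigma_\pi^2}\bigr)^{-1}$ obtained from the empirical-risk-plus-KL objective, match the paper's proof exactly. The gap is in the moment term, and it is not just a technical nuisance: the route you describe --- keeping the centering term $-\hat{\mathcal{L}}_{X',Y'}^{\ell_{\mathrm{nll}}}(f)$, factorizing the i.i.d.\ expectation over $(X',Y')$, and evaluating per-sample Gaussian MGFs --- does not collapse to the stated closed form. After integrating out $\epsilon_i'$ and the gradient mixture, each per-sample factor is of the form $\sum_j \phi_j\,(1+\lambda\sigma_\epsilon^2+\lambda\sigma_{\bx j}^2\Vert\bv\Vert_2^2)^{-1/2}\exp\bigl(-\tfrac{\lambda(\bmu_j^\top\bv)^2}{2(1+\lambda\sigma_\epsilon^2+\lambda\sigma_{\bx j}^2\Vert\bv\Vert_2^2)}\bigr)$ with $\bv=\bw_*-\bw$; raising this to the $n$-th power, multiplying by $\exp[\tfrac{\lambda n}{2}(\bE(\bg^\top\bv)^2+\sigma_\epsilon^2)]$ and then integrating over $\bw\sim\pi$ is not a Gaussian integral of a quadratic form (the exponent depends nonquadratically on $\bv$ through $\Vert\bv\Vert_2^2$ in denominators), so there is no closed form and certainly not the compact expression $\sigma_\bx^2(\sigma_\pi^2 d+\Vert\bw_*\Vert_2^2)/(1-2\lambda n\sigma_\bx^2\sigma_\pi^2)+\sigma_\epsilon^2$. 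You flag this step as your expected obstacle, but the resolution is a different idea, not more careful bookkeeping.

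The paper's proof sidesteps the exact computation entirely: since the empirical part contributes a factor $\exp[-\lambda n\,\hat{\mathcal{L}}_{X',Y'}^{\ell_{\mathrm{nll}}}(f)]\le 1$ (after the $\ln(2\pi)$ terms cancel, the remaining exponent is nonpositive), it is simply discarded, which both upper-bounds the moment and removes the $\bE_{X',Y'}$ expectation. Only the prior expectation remains, which is a genuine noncentral $\chi^2$ MGF once $\bE_\bx[(\nabla_\bw f(\bx;\bw_{\hat\rho})^\top(\bw_*-\bw))^2]$ is reduced to $\sigma_\bx^2\Vert\bw_*-\bw\Vert_2^2$ under the mixture assumption; a final application of $\ln u< u-1$ turns the log-determinant term into $\lambda_n\sigma_\bx^2\sigma_\pi^2 d/(1-2\lambda_n\sigma_\bx^2\sigma_\pi^2)$, yielding the stated numerator $\sigma_\pi^2 d+\Vert\bw_*\Vert_2^2$ and the constraint $\lambda\in(0,1/c)$. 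This discard step is also why the paper calls the bound loose. To complete your proof you would either have to adopt this discard-and-bound argument, or show separately that your exact (smaller) moment is dominated by the stated expression --- which, once written out, is again the discard argument.
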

\begin{proof}{(Sketch)}
We first develop all the terms in the PAC-Bayes bound based on our modelling choices. We start with the empirical risk term
\begin{equation*}
\begin{split}
&2n\bE_{\bw\sim\hat{\rho}}\hat{\mathcal{L}}_{X,Y}^{\ell_{\mathrm{nll}}}(\bw)  =\Vert\by-f(\bX;\bw_{\hat{\rho}})\Vert^2_2+ \sigma^2_{\hat{\rho}}(\lambda)h+n\ln(2\pi),\\
\end{split}
\end{equation*}
%
%
where we set $h = \sum_{i}\sum_{j}(\nabla_{\bw}f(\bx_i;\bw_{\hat{\rho}})_j)^2$. This factor $h$ is the only one multiplied with the posterior variance $\sigma^2_{\hat{\rho}}(\lambda)$. It can be interpreted as measuring the \emph{curvature} of the loss landscape. 

We continue with the KL term. For our modelling choice of Gaussian prior $\pi=\mathcal{N}(\bw_{\pi},\sigma^2_{\pi}\bI)$ and posterior $\hat{\rho}=\mathcal{N}(\bw_{\hat{\rho}},\sigma^2_{\hat{\rho}}(\lambda)\bI)$, the KL has the following analytical expression
\begin{multline*}
\mathrm{KL}(\mathcal{N}(\bw_{\hat{\rho}},\sigma^2_{\hat{\rho}}(\lambda)\bI)\Vert \mathcal{N}(\bw_{\pi},\sigma^2_{\pi}\bI)) +\ln\frac{1}{\delta}\\ 
=\frac{1}{2}\left(d\frac{\sigma^2_{\hat{\rho}}(\lambda)}{\sigma_{\pi}^2}   + \frac{1}{\sigma^2_{\pi}}\Vert \bw_{\hat{\rho}}-\bw_{\pi}\Vert ^2 -d-d \ln\sigma^2_{\hat{\rho}}(\lambda)+d \ln\sigma_{\pi}^2 \right) +\ln\frac{1}{\delta}.
\end{multline*}

Finally we develop the moment term. We find the following upper bound
\begin{equation*}
\begin{split}
\frac{1}{\lambda}\Psi_{\ell,\pi,\mathcal{D}}(\lambda,n)  
& = \ln \bE_{f\sim\pi}\bE_{X',Y'\sim\mathcal{D}^n}\exp \left[\lambda n\left(\mathcal{L}_{\mathcal{D}}^{\ell_{\mathrm{nll}}}(f)- \hat{\mathcal{L}}_{X',Y'}^{\ell_{\mathrm{nll}}}(f)\right)  \right]\\ 
& \leq \ln \bE_{f\sim\pi}\exp \left[\lambda n\left(\mathcal{L}_{\mathcal{D}}^{\ell_{\mathrm{nll}}}(f)\right)  \right]= \frac{\sigma^2_{\bx}(\sigma^2_{\pi}d+\Vert \bw_{*}\Vert ^2)}{(1-\lambda c)}+\sigma_{\epsilon}^2,\\
\end{split}
\end{equation*}
where $c = 2 n\sigma^2_{\bx}\sigma^2_{\pi}$ and $\lambda \in (0,\frac{1}{c})$. In bounding the moment term we first recognize that the random variable consists in the difference $\mathcal{L}_{\mathcal{D}}^{\ell_{\mathrm{nll}}}(f)-\hat{\mathcal{L}}_{X',Y'}^{\ell_{\mathrm{nll}}}(f)$. We remove the term $\hat{\mathcal{L}}_{X',Y'}^{\ell_{\mathrm{nll}}}(f)$ resulting in an upper bound to the moment, and in this way we avoid also having to calculate the expectation $\bE_{X',Y'\sim\mathcal{D}^n}$. For samples from the prior $f\sim\pi$, we can then compute the remaining expectation because we have assumed that the labelling function is known and equal to $y = \phi(\bx)\bw^{*}+\epsilon$ where $\epsilon\sim\mathcal{N}(0,\sigma_{\epsilon}^2)$. The cost of finding an analytical expression is that this bound is very loose. While $\mathcal{L}_{\mathcal{D}}^{\ell_{\mathrm{nll}}}(f)- \hat{\mathcal{L}}_{X',Y'}^{\ell_{\mathrm{nll}}}(f)$ can be small $\mathcal{L}_{\mathcal{D}}^{\ell_{\mathrm{nll}}}(f)$ by itself can be very large. We are also now saddled with the constraint $\lambda \in (0,\frac{1}{c})$ where $c = 2 \sigma^2_{\bx}\sigma^2_{\pi}$. We will see that this is also pessimistic.


We now minimize with respect to $\sigma^2_{\hat{\rho}}(\lambda)$ the following objective which is equivalent to the one of Equation (4) in \citet{wenzel2020good}
\begin{equation*}
\min_{\sigma^2_{\hat{\rho}}(\lambda)} \bE_{\bw\sim\hat{\rho}}\hat{\mathcal{L}}_{X,Y}^{\ell_{\mathrm{nll}}}(\bw) + \frac{1}{\lambda n}\left[\mathrm{KL}(\mathcal{N}(\bw_{\hat{\rho}},\sigma^2_{\hat{\rho}}(\lambda)\bI)\Vert \mathcal{N}(\bw_{\pi},\sigma^2_{\pi}\bI)) +\ln\frac{1}{\delta}\right].
\end{equation*}
For our particular modeling choices, we can then find the minimum in closed-form by setting the gradient with respect to $\sigma^2_{\hat{\rho}}(\lambda)$ to be zero. We get 
$
\sigma^2_{\hat{\rho}}(\lambda)=\frac{1}{\frac{\lambda h}{d}+\frac{1}{\sigma^2_{\pi}}}
$ 
where $h = \sum_{i=1}^d\sum_{j=1}^n\phi_i(\bx_j)^2$. We get our result by putting everything in the original bound.
\end{proof}

We now make a number of observations regarding Proposition~\ref{dnn_approximate}. The first is that we started our derivations from the linearized deep neural network and the negative log-likelihood with a Gaussian likelihood however the empirical risk term in Proposition~\ref{dnn_approximate} is exactly the Gauss--Newton approximation to the loss landscape for the full deep neural network. Here $h$ is the trace of the Hessian under the Gauss Newton approximation (without a scaling factor $n$). We refer to details about this interpretation of the empirical risk to the Appendix. 
We call the bound of Proposition~\ref{dnn_approximate} the $\mathcal{B}_{\mathrm{approximate}}$ bound. The r.h.s of the inequality in Proposition~\ref{dnn_approximate} is unwieldy, and provides little intuition. We simplify it by making the following parameter choices 
\begin{corollary}\label{simplification}
For $n=d=h=\sigma_{\pi}^2=\sigma_{\bx}^2=\Vert \bw_{*}\Vert _2^2=\Vert \bw_{\hat{\rho}}-\bw_{\pi}\Vert =\sigma_{\epsilon}^2=1$, $\Vert \by-f(\bX;\bw_{\hat{\rho}}) \Vert _2^2=0$, and ignoring additive constants, the dependence of Proposition~\ref{dnn_approximate} bound $\mathcal{B}_{\mathrm{approximate}}$ on the temperature parameter $\lambda$ is as follows, with probability at least $1-\delta$
\begin{equation}\label{simplification_eq}
\bE_{\bw\sim\hat{\rho}}\mathcal{L}_{\mathcal{D}}^{\ell_{\mathrm{nll}}}(\bw) \leq \textcolor{red}{\underbrace{\frac{1}{2}\frac{1}{\lambda+1}}_{\normalfont{\text{Empirical Risk} }} } 
+ \textcolor{Green}{\underbrace{\frac{2}{1-2\lambda}}_{\normalfont{\text{Moment} }}}+\textcolor{blue}{\underbrace{\frac{1}{\lambda} \left[\frac{1}{2}\left(\frac{1}{\lambda+1}-\ln\frac{1}{\lambda+1} \right) +\ln\frac{1}{\delta}\right]}_{\normalfont{\text{KL} }}}.
\end{equation}
\end{corollary}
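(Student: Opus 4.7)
The plan is to prove the corollary by direct substitution into the expression from Proposition~\ref{dnn_approximate}, treating each of the three color-coded terms (Empirical Risk, Moment, KL) separately and then discarding any summands that do not depend on $\lambda$.

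First I would tackle the Empirical Risk term
$\frac{\Vert \by-f(\bX;\bw_{\hat{\rho}})\Vert _2^2}{2n}+\left(\frac{\lambda h}{d}+\frac{1}{\sigma_{\pi}^2}\right)^{-1}\frac{h}{2n}+\frac{1}{2}\ln(2\pi)$. Under the substitutions $n=d=h=\sigma_{\pi}^2=1$ the first summand vanishes because $\Vert \by-f(\bX;\bw_{\hat{\rho}})\Vert _2^2=0$, the bracket $\frac{\lambda h}{d}+\frac{1}{\sigma_{\pi}^2}$ collapses to $\lambda+1$, and the $\frac{1}{2}\ln(2\pi)$ is an additive constant that is to be dropped. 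This leaves $\frac{1}{2(\lambda+1)}$, matching the red term in \eqref{simplification_eq}. Next I would handle the Moment term $\frac{\sigma_{\bx}^2(\sigma_{\pi}^2 d+\Vert\bw_*\Vert_2^2)}{1-2\lambda n\sigma_{\bx}^2\sigma_{\pi}^2}+\sigma_{\epsilon}^2$: substituting $\sigma_{\bx}^2=\sigma_{\pi}^2=d=\Vert\bw_*\Vert_2^2=n=1$ gives $\frac{2}{1-2\lambda}$ and the constant $\sigma_{\epsilon}^2=1$ is again dropped, matching the green term.

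The KL term requires a little more bookkeeping but is still mechanical. Under the assumed choices, $\frac{d}{\sigma_{\pi}^2}\cdot\frac{1}{\frac{\lambda h}{d}+\frac{1}{\sigma_{\pi}^2}}=\frac{1}{\lambda+1}$, $\frac{1}{\sigma_{\pi}^2}\Vert\bw_{\hat{\rho}}-\bw_{\pi}\Vert_2^2=1$, $-d=-1$, $-d\ln\frac{1}{\frac{\lambda h}{d}+\frac{1}{\sigma_{\pi}^2}}=-\ln\frac{1}{\lambda+1}$ and $d\ln\sigma_{\pi}^2=0$. The pair $+1-1$ cancels exactly, so the bracket inside the KL reduces to $\frac{1}{2}\bigl(\frac{1}{\lambda+1}-\ln\frac{1}{\lambda+1}\bigr)+\ln\frac{1}{\delta}$. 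Multiplying by the prefactor $\frac{1}{\lambda n}=\frac{1}{\lambda}$ yields the blue term of \eqref{simplification_eq}. Summing the three reduced terms and applying Proposition~\ref{dnn_approximate} with probability at least $1-\delta$ gives the claimed inequality.

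There is no real obstacle in this argument; the only subtle point is to be explicit about which summands constitute ``additive constants.'' In particular I would note that although $\ln\frac{1}{\delta}$ is constant in the data, it is divided by $\lambda$ and therefore cannot be absorbed into the additive constants when viewing the right-hand side as a function of the temperature; this is why it is retained inside the blue KL term. All other $\lambda$-independent terms (the $\frac{1}{2}\ln(2\pi)$ from the Gaussian log-likelihood, the $\sigma_{\epsilon}^2=1$ from the Moment, and the $+1-1$ cancellation inside the KL) are discarded as promised by the statement.
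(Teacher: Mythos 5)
Your proposal is correct and is essentially the paper's own proof, which simply reads off the corollary by direct inspection of Proposition~\ref{dnn_approximate}: substituting the stated parameter values term by term and discarding the $\lambda$-independent constants ($\tfrac{1}{2}\ln(2\pi)$, $\sigma_{\epsilon}^2$, and the cancelling $+d-d$ inside the KL) yields exactly the three displayed terms. Your added remark that $\ln\tfrac{1}{\delta}$ must be kept because of the $\tfrac{1}{\lambda}$ prefactor is a correct and worthwhile clarification, but it does not change the argument.
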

\begin{figure*}[t!]
\centering
\begin{subfigure}{.33\textwidth}
  \centering
  \includegraphics[scale=0.4]{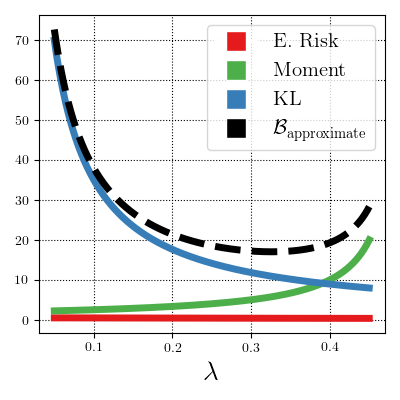}
  \caption{$\mathcal{B}_{\mathrm{approximate}}$ for Corollary~\ref{simplification}.}
  \label{theory_img:figure1}
\end{subfigure}%
\begin{subfigure}{.33\textwidth}
  \centering
  \includegraphics[scale=0.4]{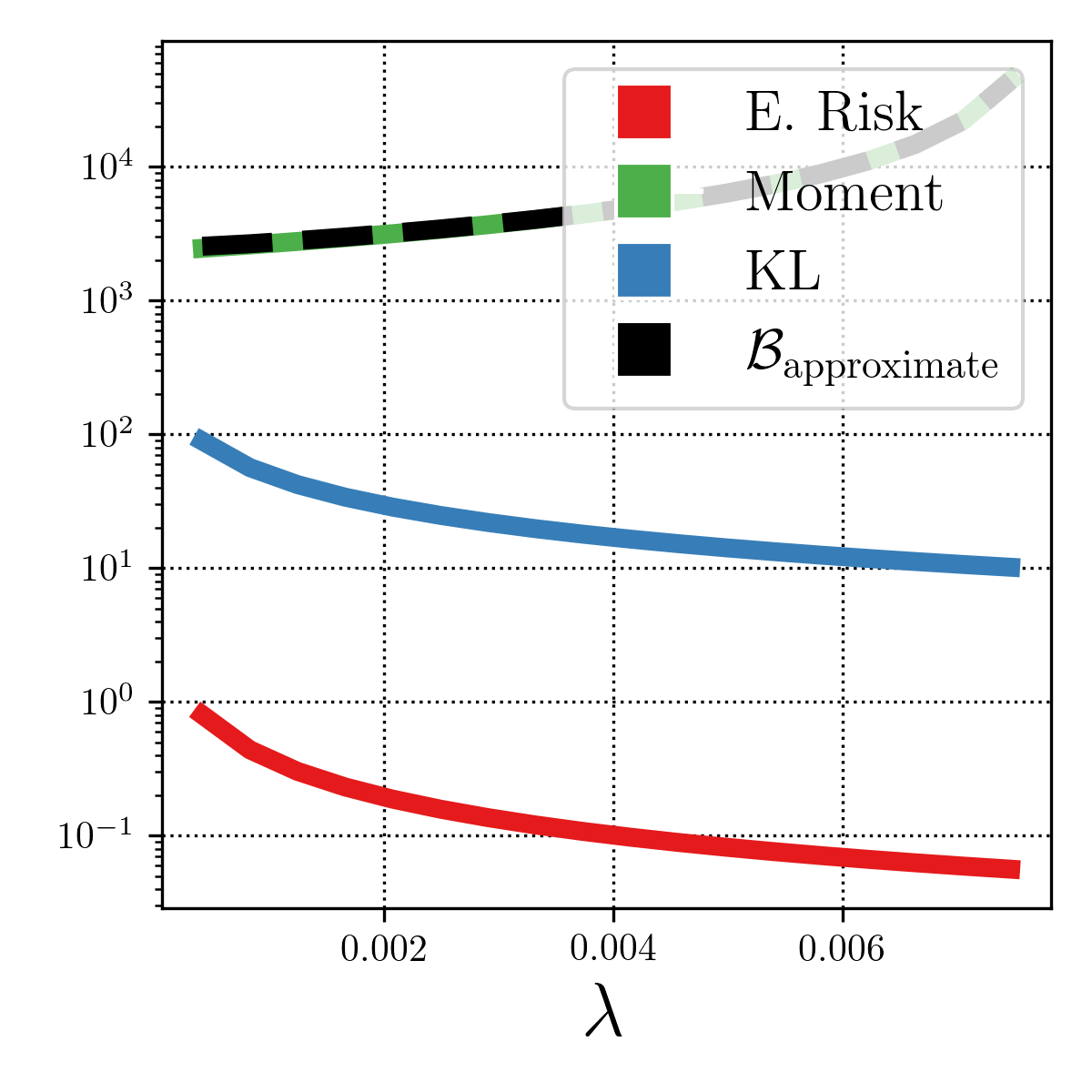}
  \caption{$\mathcal{B}_{\mathrm{approximate}}$ for KC\_House. }
  \label{theory_img:figure2}
\end{subfigure}%
\begin{subfigure}{.33\textwidth}
  \centering
  \includegraphics[scale=0.4]{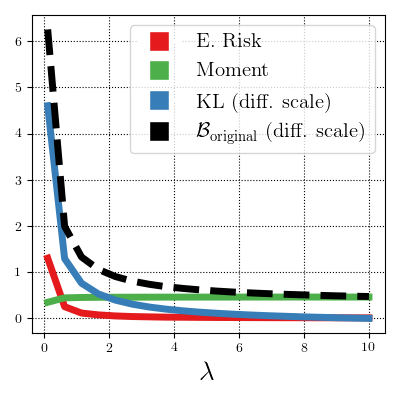}
  \caption{$\mathcal{B}_{\mathrm{original}}$ for KC\_House. }
  \label{theory_img:figure3}
\end{subfigure}%
\caption{$\mathcal{B}_{\mathrm{approximate}}$ as a function of the $\lambda$ parameter. We plot the empirical risk, moment and KL terms, as well as the $\mathcal{B}_{\mathrm{approximate}}$ bound values for different $\lambda$. In Figure \ref{theory_img:figure1} we plot the simplified case of Corollary~\ref{simplification}. We see that the empirical risk and KL terms decrease while the moment term decreases as $\lambda$ increases. The bound first decreases and then increases, and has a global minimum. In Figure \ref{theory_img:figure2} we plot the $\mathcal{B}_{\mathrm{approximate}}$ bound for a single MAP estimate of a neural network trained on the KC\_House dataset. The x-axis is contains only values $\lambda<1$, which limits our investigation of the cold posterior effect, also the Moment term explodes.  In Figure \ref{theory_img:figure3} we plot the $\mathcal{B}_{\mathrm{original}}$ bound for the same model and dataset. We can now investigate values of $\lambda>1$. The terms increase or decrease according to the intuition in Corollary~\ref{simplification}. However the KL term decreases faster than the moment increases and the Bound always \emph{decreases} as we increase $\lambda$.}
  \label{theory_img:figure_full}
\end{figure*}
From Equation \eqref{simplification_eq} it is easy to derive some intuition as to the effect of $\lambda$. We see that as $\lambda$ increases the empirical risk \emph{decreases} while the moment term \emph{increases}. For this particular modeling choice the KL term \emph{decreases} as we increase $\lambda$. We plot the $\mathcal{B}_{\mathrm{approximate}}$ bound for the parameter choices of Corollary~\ref{simplification} in Figure \ref{theory_img:figure1}. We see that as a function of $\lambda$ the bound first decreases and then increases, \emph{the bound is minimized for an intermediate value}. This analysis also gives us some interesting intuition regarding cold posteriors in general. Under the PAC-Bayesian modeling of the risk, cold posteriors are the result of a complex interaction between the various parameters of the bound. This might explain why pinpointing their cause is difficult in practice. 

In the next section we show that for real datasets and models, the bound terms often have the same behaviour as in Corollary~\ref{simplification}. However this is not the case for the overall bound, which has different shapes based on different hyperparameter values. In particular in some cases the bound as a function of $\lambda$ is sometimes not convex, while in others it does not have a global minimum, but decreases as $\lambda \rightarrow +\infty$.



\section{Experiments}
\begin{figure*}[t!]
\centering
\begin{subfigure}{.25\textwidth}
  \centering
  \includegraphics[width=\textwidth]{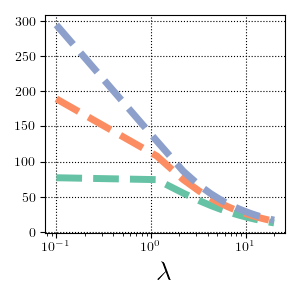}
\end{subfigure}%
\begin{subfigure}{.25\textwidth}
  \centering
  \includegraphics[width=\textwidth]{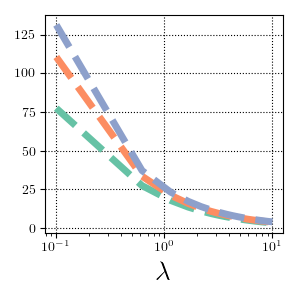}
\end{subfigure}%
\begin{subfigure}{.25\textwidth}
  \centering
  \includegraphics[width=\textwidth]{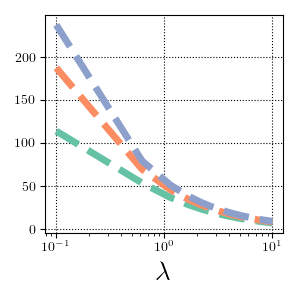}
\end{subfigure}%
\begin{subfigure}{.25\textwidth}
  \centering
  \includegraphics[width=\textwidth]{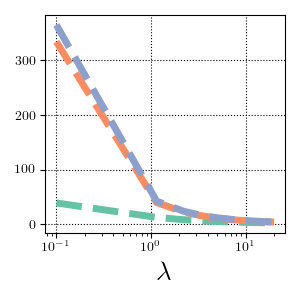}
\end{subfigure}

\begin{subfigure}{.25\textwidth}
  \centering
  \includegraphics[width=\textwidth]{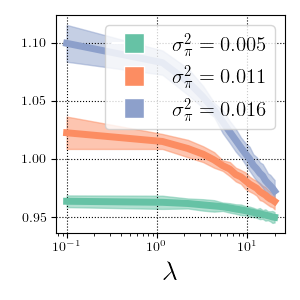}
  \caption{UCI, Abalone}
   \label{exp_fig:figure1}
\end{subfigure}%
\begin{subfigure}{.25\textwidth}
  \centering
  \includegraphics[width=\textwidth]{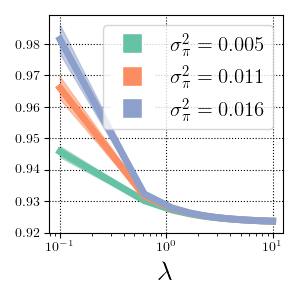}
  \caption{UCI, Diamonds}
   \label{exp_fig:figure2}
\end{subfigure}%
\begin{subfigure}{.25\textwidth}
  \centering
  \includegraphics[width=\textwidth]{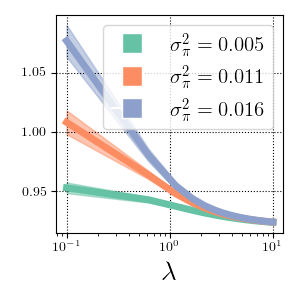}
  \caption{KC\_House}
   \label{exp_fig:figure3}
\end{subfigure}%
\begin{subfigure}{.25\textwidth}
  \centering
  \includegraphics[width=\textwidth]{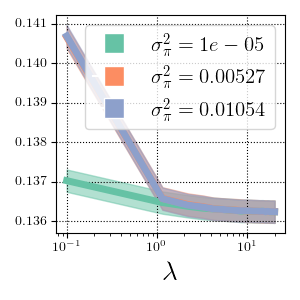}
  \caption{MNIST-10}
   \label{exp_fig:figure4}
\end{subfigure}%

  \caption{$\mathcal{B}_{\mathrm{original}}$ PAC-Bayes bound (top row) and negative log-likelihood (bottom row) for varying $\lambda$ (quantities on the y-axis are normalized). (a-c) are regression tasks on the UCI Abalone, UCI Diamonds and KC\_House datasets, while (d) is a classification task on the MNIST-10 dataset. The $\mathcal{B}_{\mathrm{original}}$ PAC-Bayes bound closely tracks the negative log-likelihood.}
  \label{exp_fig:figure_full}
\end{figure*}
We tested our theoretical results on three regression datasets and one classification dataset. The regression tasks are with the Abalone and Diamonds datasets from the UCI repository \cite{Dua:2019}, as well as with the popular ``House Sales in King County, USA'' (KC\_House) dataset from the Kaggle competition website \cite{kaggle}. We chose these datasets because they provide a large number of samples, compared to more common UCI datasets such as Boston and Wine. This larger number of samples makes it easier to compute oracle quantities. For the classification task we used the typical MNIST-10 dataset \cite{deng2012mnist}. When needing extra samples to approximate the complete distribution we used samples from the EMNIST \cite{cohen2017emnist} dataset.

In all experiments we split the dataset into five sets. Three of them are the typical prediction tasks sets: training set $Z_{\mathrm{train}}$, testing set $Z_{\mathrm{test}}$, and validation set $Z_{\mathrm{validation}}$. Our experimental setup requires two extra sets: the ``train-suffix'' set $Z_{\mathrm{trainsuffix}}$, as well as a large sample set called ``true'' set  $Z_{\mathrm{true}}$ that is used to approximate the complete distribution.

We will encounter several problems when trying to evaluate the $\mathcal{B}_{\mathrm{approximate}}$ bound in practice.  Firstly, as we mentioned, we expect the $\mathcal{B}_{\mathrm{approximate}}$ bound to be loose. We will therefore find it useful to evaluate the original bound, without any approximations. As we mentioned previously, we name this the $\mathcal{B}_{\mathrm{original}}$. We approximate \emph{both} the moment term \emph{and} the empirical risk term using Monte Carlo sampling with $f\sim\pi$ and $X',Y'\sim\mathcal{D}$, and $f\sim\hat{\rho}$. 
An additional problem when analyzing our classification model is that we derived the $\mathcal{B}_{\mathrm{approximate}}$ bound for model with a single output, while the MNIST-10 dataset has 10 classes and thus a 10-dimensional output. Furthermore classification models are not typically trained with the Gaussian likelihood, but with a softmax activation coupled with the categorical crossentropy loss. In the previous section we observed that for the linearized neural network and the Gaussian likelihood we arrived at a second order Taylor expansion of the loss landscape where the Hessian was approximated with the Gauss--Newton approximation. In the case of MNIST, we thus make this Taylor expansion directly on the full cross-entropy empirical loss $\hat{\mathcal{L}}^{\ell}_{X,Y}(f_{\bw})\approx\hat{\mathcal{L}}^{\ell}_{X,Y}(f_{\bw_{\hat{\rho}}})+\frac{\sigma_{\hat{\rho}}h}{2n}$ where $h = \left[\sum_{k}\sum_{i}\sum_{j}g(i,k)(\nabla_{\bw}f_k(\bx_i;\bw_{\hat{\rho}})_j)^2\right]$ is the trace of the Gauss--Newton approximation to the Hessian, for the categorical cross-entropy loss (without a scaling factor $n$) ($k$ denotes the output dimensions and $g(i,k)$ is a special function, see \cite{kunstner2019limitations} eq. 35 for details). We can now similarly to before calculate $\sigma^2_{\hat{\rho}}(\lambda)=\left(\frac{1}{\frac{\lambda h}{d}+\frac{1}{\sigma_{\pi}^2}}\right)$ for our Laplace approximation to the posterior.

PAC-Bayes bounds require correct control of the prior mean as the $\ell_2$ distance between prior and posterior means in the KL term is often the dominant term in the bound. To control this distance we follow a variation of the approach in \citet{dziugaite2021role} to constructing our classifiers. We first use the $Z_{\mathrm{train}}$ to find a prior mean $\bw_{\pi}$. We then set the posterior mean equal to the prior mean $\bw_{\hat{\rho}}=\bw_{\pi}$ but evaluate the r.h.s of the bounds on the $Z_{\mathrm{trainsuffix}}$. Note that in this way $\Vert \bw_{\hat{\rho}}-\bw_{\pi}\Vert _2^2=0$, while the bound is still valid since the prior is independent from the evaluation set $X,Y=Z_{\mathrm{trainsuffix}}$. The cost is that we deviate from normal practice. We have in essence constructed a Laplace approximation where the mean was learned using $Z_{\mathrm{train}}$ while the posterior variance was learned using $Z_{\mathrm{trainsuffix}}$. 

For the UCI and KC\_House experiments we use fully connected networks with 2 hidden layers with 100 dimensions, followed by the ReLU activation function, and a final Softmax activation. For the MNIST-10 dataset we use the standard LeNet architecture \cite{Lecun726791}. More details on the experimental setup can be found in the Appendix. 

\subsection{UCI +KC\_House experiments}
We first test the applicability of the $\mathcal{B}_{\mathrm{approximate}}$ bound in practice. For this we use the KC\_House dataset, although the results for the rest of the datasets are similar. We train the network on $Z_{\mathrm{train}}$ using SGD with stepsize $\eta=10^{-3}$ for 10 epochs. We then estimate the bound using $Z_{\mathrm{trainsuffix}}$. We give details on how all bound parameters are estimated in the Appendix. We plot the results in Figure \ref{theory_img:figure2}. While the terms of the bound move in the way we expected from Corollary~\ref{simplification} We see that the bound is significantly off scale on the y axis. Specifically it is constrained to be $\lambda\ll1$ which is far from the regime that we want to explore. In Figure \ref{theory_img:figure3} we thus plot for the same dataset the $\mathcal{B}_{\mathrm{original}}$ bound. We see that some of the problems of the $\mathcal{B}_{\mathrm{approximate}}$ bound have been fixed. Specifically we can now estimate bound values for $\lambda>1$. However surprisingly the bound \emph{always decreases} as we increase $\lambda$. This is because the moment term increases at a much slower rate than the KL term decreases. We thus use the $\mathcal{B}_{\mathrm{original}}$ bound in the rest of our experiments. 

In Figures \ref{exp_fig:figure_full} we plot the results for all the regression datasets. As before we train the networks on $Z_{\mathrm{train}}$ using SGD with stepsize $\eta=10^{-3}$ for 10 epochs. We then estimate the bound using $Z_{\mathrm{trainsuffix}}$. We also plot the \emph{test} NLL of the posterior predictive. For all cases we discover a cold posterior effect. We test different prior variances and the effect gets stronger with higher variances. The test NLL decreases with colder posteriors up to the point where the classifier is essentially deterministic. The $\mathcal{B}_{\mathrm{original}}$ bound correlates tightly with this behaviour. For increasing values of $\sigma^2_{\hat{\rho}}(\lambda)$ the posterior needs to be colder ($\lambda$ larger) to achieve the same NLL. We see that the bound also tracks this behaviour.

The results for all graphs are averaged over 10 different initializations to the neural network.

\subsection{MNIST-10 experiments}
We repeat the above experiment for the MNIST-10 dataset. As before we train the networks on $Z_{\mathrm{train}}$ using SGD with stepsize $\eta=10^{-3}$ for 10 epochs. We then estimate the bound using $Z_{\mathrm{trainsuffix}}$. The behaviour is similar to the regression datasets, we again find a consistent cold-posterior effect, which the PAC-Bayes bound captures. Also we again see that the effect gets stronger with higher variances. We plot the results in Figure \ref{exp_fig:figure4}. In the Appendix we discuss two additional metrics, the Expected Calibration Error and the Zero-One loss. 

The results for all graphs are averaged over 10 different initializations to the neural network.

An important caveat that we want to disclose is that even for extremely large $\lambda$ the results of the stochastic classifiers and the deterministic ones do not match exactly. We have double checked the relevant code, and believe that the source is some numerical instabilities of the torch package. It is for this reason that we also do not plot the deterministic results on the same graphs as the stochastic ones.

\section{Discussion}
For the case of isotropic Laplace approximations to the posterior, we presented a PAC-Bayesian interpretation of the cold posterior effect. We argued that performance on out-of-sample data is best described through a PAC-Bayes bound, which naturally includes a temperature parameter $\lambda$ which is not constrained to be $\lambda=1$. There are a number of avenues for future work. The most interesting to the authors is extending this analysis to more complex approximations to the posterior and in particular Monte Carlo methods.

\clearpage

\bibliography{neurips_2022}
\bibliographystyle{abbrvnat}

\medskip

\clearpage

\appendix

\section{Proofs of main results}

\subsection{Proof of Proposition 1}

Recall that we model our predictor as $f_{\mathrm{lin}}(\bx;\bw)=f(\bx;\bw_{\hat{\rho}})-\nabla_{\bw}f(\bx;\bw_{\hat{\rho}})^{\top}(\bw-\bw_{\hat{\rho}})$. Then for the choice of a Gaussian likelihood, given a training signal $\bx$, a training label $y$ and weights $\bw$, the negative log-likelihood loss takes the form $\ell_{\mathrm{nll}}(\bw,\bx,y)=\frac{1}{2}\ln(2\pi)+\frac{1}{2}(y-f(\bx;\bw_{\hat{\rho}})-\nabla_{\bw}f(\bx;\bw_{\hat{\rho}})^{\top}(\bw-\bw_{\hat{\rho}}))^2$. We also define $\hat{\mathcal{L}}^{\ell}_{X,Y}(f)=(1/n)\sum_i\ell(f,\bx_i,y_i)$. Our derivations closely follow the approach of \cite{germain2016pac} p.11, section A.4. 

Given the above definitions and modelling choices we develop the empirical risk term
\begin{align*}
2n&\bE_{\bw\sim\hat{\rho}}\hat{\mathcal{L}}_{X,Y}^{\ell_{\mathrm{nll}}}(\bw)-n\ln(2\pi)
= \bE_{\bw\sim\hat{\rho}}\sum_{i=1}^n(y_i-f(\bx_i;\bw_{\hat{\rho}})-\nabla_{\bw}f(\bx_i;\bw_{\hat{\rho}})^{\top}(\bw-\bw_{\hat{\rho}}))^2\\
&= \bE_{\bw\sim\hat{\rho}}\Vert\by-f(\bX;\bw_{\hat{\rho}})-\nabla_{\bw}f(\bX;\bw_{\hat{\rho}})^{\top}(\bw-\bw_{\hat{\rho}})\Vert^2_2 \\
&= \bE_{\bw\sim\hat{\rho}}[\Vert\by-f(\bX;\bw_{\hat{\rho}})\Vert^2_2-2(\by-f(\bX;\bw_{\hat{\rho}}))^{\top}\nabla_{\bw}f(\bX;\bw_{\hat{\rho}})^{\top}(\bw-\bw_{\hat{\rho}})\\ 
&\qquad+ (\bw-\bw_{\hat{\rho}})^{\top}\nabla_{\bw}f(\bX;\bw_{\hat{\rho}})\nabla_{\bw}f(\bX;\bw_{\hat{\rho}})^{\top}(\bw-\bw_{\hat{\rho}})]\\
&= \bE_{\bw\sim\hat{\rho}}[\Vert\by-f(\bX;\bw_{\hat{\rho}})\Vert^2_2-2(\by-f(\bX;\bw_{\hat{\rho}}))^{\top}\nabla_{\bw}f(\bX;\bw_{\hat{\rho}})^{\top}(\bw-\bw_{\hat{\rho}})\\ 
&\qquad+ (\bw-\bw_{\hat{\rho}})^{\top}\textcolor{blue}{\left[\textstyle\sum_i\nabla_{\bw}f(\bx_i;\bw_{\hat{\rho}})\nabla_{\bw}f(\bx_i;\bw_{\hat{\rho}})^{\top}\right]}(\bw-\bw_{\hat{\rho}})]\\
&= \bE_{\bw\sim\hat{\rho}}[\Vert\by-f(\bX;\bw_{\hat{\rho}})\Vert^2_2]-2(\by-f(\bX;\bw_{\hat{\rho}}))^{\top}\nabla_{\bw}f(\bX;\bw_{\hat{\rho}})^{\top}\textcolor{red}{\cancel{\bE_{\bw\sim\hat{\rho}}[\bw-\bw_{\hat{\rho}}]}}\\ 
&\qquad+ \bE_{\bw\sim\hat{\rho}}\left[(\bw-\bw_{\hat{\rho}})^{\top}\left[\textstyle\sum_i\nabla_{\bw}f(\bx_i;\bw_{\hat{\rho}})\nabla_{\bw}f(\bx_i;\bw_{\hat{\rho}})^{\top}\right](\bw-\bw_{\hat{\rho}})\right]\\
&= \Vert\by-f(\bX;\bw_{\hat{\rho}})\Vert^2_2+ \sigma_{\hat{\rho}}^2\left[\textstyle\sum_{i}\textstyle\sum_{j}(\nabla_{\bw}f(\bx_i;\bw_{\hat{\rho}})_j)^2\right]\\
&= \Vert\by-f(\bX;\bw_{\hat{\rho}})\Vert^2_2+ \sigma_{\hat{\rho}}^2h.
\end{align*}
In the penultimate line, we have used the fact that a real number is the trace of itself as well as the cyclic property of the trace. The second summation ($\sum_j$ over the parameters of the model) results from the fact that $\hat{\rho}=\mathcal{N}(\bw_{\hat{\rho}},\sigma^2_{\hat{\rho}}\bI)$ is isotropic with a common scaling factor $\sigma^2_{\hat{\rho}}$. The term in blue is exactly the Gauss--Newton approximation to the Hessian of the full neural network, for the squared loss function \cite{kunstner2019limitations,immer2021improving}, and in the last line we set $h = \left[\sum_{i}\sum_{j}(\nabla_{\bw}f(\bx_i;\bw_{\hat{\rho}})_j)^2\right]$. Since $h$ is a sum of positive numbers, taking into account that the blue term is the Gauss--Newton approximation to the Hessian and if we assume that the Gauss--Newton approximation is diagonal, then $h$ is a measure of the curvature at minimum $\bw_{\hat{\rho}}$ of the loss landscape. We finally get
\begin{equation*}
    \bE_{\bw\sim\hat{\rho}}\hat{\mathcal{L}}_{X,Y}^{\ell_{\mathrm{nll}}}(\bw) = \frac{\Vert\by-f(\bX;\bw_{\hat{\rho}})\Vert^2_2}{2n}+\frac{ \sigma_{\hat{\rho}}^2h}{2n}
+\frac{1}{2}\ln(2\pi).
\end{equation*}
We continue with the KL term which is known to have the following analytical expression for  Gaussian prior and posterior distributions
\begin{align*}
\mathrm{KL}(\mathcal{N}(\bw_{\hat{\rho}},\sigma^2_{\hat{\rho}}\bI)\Vert\mathcal{N}(\bw_{\pi},\sigma^2_{\pi}\bI))
 =\frac{1}{2}\left(d\frac{\sigma^2_{\hat{\rho}}}{\sigma_{\pi}^2} + \frac{1}{\sigma^2_{\pi}}\Vert\bw_{\hat{\rho}}-\bw_{\pi}\Vert^2 -d-d \ln\frac{\sigma^2_{\hat{\rho}}}{\sigma_{\pi}^2} \right).
\end{align*}
We finally develop the moment term. Using an intermediate variable $\lambda_n=\frac{\lambda n}{2}$ to simplify the calculations, we get

\begin{align*}
\Psi_{\ell,\pi,\mathcal{D}}&(\lambda,n)=
\ln \bE_{f\sim\pi}\bE_{(X',Y')\sim\mathcal{D}^n}\exp \left[\lambda n\left(\mathcal{L}_{\mathcal{D}}^{\ell_{\mathrm{nll}}}(f)- \hat{\mathcal{L}}_{X',Y'}^{\ell_{\mathrm{nll}}}(f)\right)  \right]\\
&=\ln \bE_{f\sim\pi}\bE_{(X',Y')\sim\mathcal{D}^n}\exp \left[\lambda_n \left(\bE_{(\bx,y)}\left[\ln(2\pi)+(y-f_{\mathrm{lin}}(\bx;\bw)^2\right] \right.\right.\\
&\qquad - \left.\left. \ln(2\pi)-(1/n)\textstyle\sum_i(y_i-f_{\mathrm{lin}}(\bx_i;\bw)^2 \right)  \right]\\
&=\ln \bE_{f\sim\pi}\bE_{(X',Y')\sim\mathcal{D}^n}\exp \left[\lambda_n \left(\bE_{(\bx,y)}\left[(y-f_{\mathrm{lin}}(\bx;\bw)^2\right]-(1/n)\textstyle\sum_i(y_i-f_{\mathrm{lin}}(\bx_i;\bw)^2 \right)  \right]\\
&\leq\ln \bE_{\bw\sim\pi}\exp \left[\lambda_n\bE_{(\bx,y)}\left(y- f_{\mathrm{lin}}(\bx;\bw)\right)^2  \right]\\
&=\ln \bE_{\bw\sim\pi}\exp [\lambda_n\bE_{(\bx,y)}(f(\bx;\bw_{\hat{\rho}})+\nabla_{\bw}f(\bx;\bw_{\hat{\rho}})^{\top}(\bw_*-\bw_{\hat{\rho}})+\epsilon\\
&\qquad - (f(\bx;\bw_{\hat{\rho}})+\nabla_{\bw}f(\bx;\bw_{\hat{\rho}})^{\top}(\bw-\bw_{\hat{\rho}})))^2]\\
&=\ln \bE_{\bw\sim\pi}\exp [\lambda_n\bE_{(\bx,y)}(\nabla_{\bw}f(\bx;\bw_{\hat{\rho}})^{\top}(\bw_*-\bw)+\epsilon)^2]\\
&=\ln \bE_{\bw\sim\pi}\exp [\lambda_n\bE_{\bx}[(\nabla_{\bw}f(\bx;\bw_{\hat{\rho}})^{\top}(\bw_*-\bw))^2]+\lambda_n\sigma_{\epsilon}^2].
\end{align*}

Inequality in line 4 is because the exponential function is less than 1 on the negative half line. In the fifth line we use our modelling choice $y = f(\bx;\bw_{\hat{\rho}})+\nabla_{\bw}f(\bx;\bw_{\hat{\rho}})^{\top}(\bw_*-\bw_{\hat{\rho}})+\epsilon$, where $\epsilon\sim\mathcal{N}(0,\sigma_{\epsilon}^2)$. To obtain the final line we note that the gradient of the \emph{neural network output} with respect to $\bw$, that is $\nabla_{\bw}f(\bx;\bw_{\hat{\rho}})$, does \emph{not} depend on the label $y$. We get the last line by applying the square and taking the expectation, given that the noise $\epsilon$ is centered.

We now take into account the Gaussian mixture modelling for the gradients per data sample, $\nabla_{\bw}f(\bx;\bw_{\hat{\rho}})\sim\sum_{j=1}^k\phi_j\mathcal{N}(\bmu_j,\sigma_{\bx j}^2\bI)$. We get

\begin{align*}
\bE_{\bx}&[(\nabla_{\bw}f(\bx;\bw_{\hat{\rho}})^{\top}(\bw_*-\bw))^2]
=\bE_{\bx}[(\textstyle\sum_i\nabla_{\bw}f(\bx;\bw_{\hat{\rho}})_i(\bw_*-\bw)_i)^2]\\
&=\bE_{\bx}\left[(\textstyle\sum_i\nabla_{\bw}f(\bx;\bw_{\hat{\rho}})_i^2(\bw_*-\bw)_i^2
+\textcolor{red}{2\textstyle\sum_{i,j}\nabla_{\bw}f(\bx;\bw_{\hat{\rho}})_i\nabla_{\bw}f(\bx;\bw_{\hat{\rho}})_j(\bw_*-\bw)_i(\bw_*-\bw)_j)}\right]\\
&=\textstyle\sum_i\bE_{\bx}[\nabla_{\bw}f(\bx;\bw_{\hat{\rho}})_i^2](\bw_*-\bw)_i^2
=\textstyle\sum_i\textcolor{blue}{\textstyle\sum_{j=1}^k(\phi_j\sigma^2_{\bx j})}(\bw_*-\bw)_i^2
=\textcolor{blue}{\sigma_{\bx}^2}\Vert\bw_*-\bw\Vert_2^2.
\end{align*}

The red term cancels out because we assumed that each weight is independent from the others. Next we use the Gaussian mixture modelling to get $\bE_{\bx}[\nabla_{\bw}f(\bx;\bw_{\hat{\rho}})_i^2]=\sum_{j=1}^k(\phi_j\sigma^2_{\bx j})$, and we finally set $\sigma_{\bx}^2=\sum_{j=1}^k(\phi_j\sigma^2_{\bx j})$, as each component of the mixture is isotropic, thus the second moment of all weights is the same. 
By completing the square above, one obtains the Gaussian expectation of this squared norm and forms the moment term  as follows

\begin{align*}
\Psi_{\ell,\pi,\mathcal{D}}&(\lambda,n)=\ln \bE_{\bw\sim\pi}\exp \left[\lambda_n \textcolor{blue}{\sigma_{\bx}^2}\Vert\bw_*-\bw\Vert_2^2 +\lambda_n\sigma_{\epsilon}^2\right] \\
&=\ln \left(\frac{1}{(1-2\lambda_n\sigma_{\bx}^2\sigma_{\pi}^2)^{\frac{d}{2}}} \exp \left[\frac{\lambda_n\sigma_{\bx}^2\Vert\bw_*-\bw_{\pi}\Vert_2^2}{1-2\lambda_n\sigma_{\bx}^2\sigma_{\pi}^2}+\lambda_n\sigma_{\epsilon}^2\right]\right)\\
&=-\frac{d}{2}\ln(1-2\lambda_n\sigma_{\bx}^2\sigma_{\pi}^2) + \frac{\lambda_n\sigma_{\bx}^2\Vert\bw_*-\bw_{\pi}\Vert_2^2}{1-2\lambda_n\sigma_{\bx}^2\sigma_{\pi}^2}+\lambda_n\sigma_{\epsilon}^2\\
&\leq\frac{\lambda_n\sigma_{\bx}^2\sigma_{\pi}^2d}{1-2\lambda_n\sigma_{\bx}^2\sigma_{\pi}^2} + \frac{\lambda_n\sigma_{\bx}^2\Vert\bw_*-\bw_{\pi}\Vert_2^2}{1-2\lambda_n\sigma_{\bx}^2\sigma_{\pi}^2}+\lambda_n\sigma_{\epsilon}^2\\
&=\frac{\lambda_n\sigma_{\bx}^2(\sigma_{\pi}^2d+\Vert\bw_*-\bw_{\pi}\Vert_2^2)}{1-2\lambda_n\sigma_{\bx}^2\sigma_{\pi}^2}+\lambda_n\sigma_{\epsilon}^2,\\
\end{align*}
which assumes $1-2\lambda_n\sigma_{\bx}^2\sigma_{\pi}^2>0$. The second line above is obtained by using the moment generating function of noncentral $\chi^2$ variables, while the inequality comes from $\ln(u)< u-1$ for $u>1$. 
Setting back $\frac{\lambda n}{2}$ in place of $\lambda_n$, we get

\begin{equation*}
    \frac{1}{\lambda n}\Psi_{\ell,\pi,\mathcal{D}}(\lambda,n) \leq \frac{\sigma_{\bx}^2(\sigma_{\pi}^2d+\Vert\bw_*-\bw_{\pi}\Vert_2^2)}{2-2\lambda n 2\sigma_{\bx}^2\sigma_{\pi}^2}+\frac{\sigma_{\epsilon}^2}{2}.
\end{equation*} 

We are now ready to minimize the following objective, where the moment term is absent since it does not depend on $\sigma^2_{\hat{\rho}}$
\begin{equation*}
\min_{\sigma^2_{\hat{\rho}}}  \bE_{\bw\sim\hat{\rho}}\hat{\mathcal{L}}_{X,Y}^{\ell_{\mathrm{nll}}}(\bw) + \frac{1}{\lambda n}\left[\mathrm{KL}(\mathcal{N}(\bw_{\hat{\rho}},\sigma^2_{\hat{\rho}}\bI)\Vert\mathcal{N}(\bw_{\pi},\sigma^2_{\pi}\bI)) +\ln\frac{1}{\delta}\right]
\end{equation*}
The derivative of the objective function w.r.t. $\sigma^2_{\hat{\rho}}$ simply writes

\begin{align*}
\frac{\partial}{\partial \sigma_{\hat{\rho}}^2}&\left(\frac{\Vert\by-f(\bX;\bw_{\hat{\rho}})\Vert^2_2}{2n}+\frac{ \sigma_{\hat{\rho}}^2h}{2n}+\frac{1}{2}\ln(2\pi)\right.\\ 
&\left.+\frac{1}{\lambda n} \left[\frac{1}{2}\left(\frac{1}{\sigma_{\pi}^2} d  \sigma^2_{\hat{\rho}} + \frac{1}{\sigma^2_{\pi}}\Vert\bw_{\hat{\rho}}-\bw_{\pi}\Vert_2^2 -d-d \ln\sigma_{\hat{\rho}}^2+d \ln\sigma_{\pi}^2 \right) +\ln\frac{1}{\delta}\right]\right)\\
&=\frac{h}{2n}+\frac{1}{2\lambda n}\left(\frac{d}{\sigma^2_{\pi}}-\frac{d}{\sigma^2_{\hat{\rho}}} \right).
\end{align*}

Now setting the above to zero we get the typical prior-to-posterior update for a Gaussian precision term $$\frac{1}{\sigma^2_{\hat{\rho}}}=\frac{\lambda h}{d}+\frac{1}{\sigma_{\pi}^2}.$$ 

The proposition is proven by replacing the terms in the bound from Theorem 1 with the results derived above.

\subsection{Proof of Corollary 1}
The result is directly obtained by a simple inspection of Proposition 1.

\section{Experiments}

\subsection{Experimental setup}
We run our experiments on GPUs of the type NVIDIA GeForce RTX2080ti, on our local cluster. The total computation time was approximately 125 GPU hours. In the following list we include the libraries and datasets that we used together with their corresponding licences
\begin{itemize}
\item Laplace-Redux Package \cite{daxberger2021laplace}: MIT License
\item Netcal package \cite{Kueppers_2021_IV}: Apache Software License
\item Pytorch package \cite{NEURIPS2019_9015}: Modified BSD Licence
\item Abalone, Diamonds datasets \cite{Dua:2019}: -
\item KC\_House datasets \cite{kaggle}: CC0, Public Domain
\item MNIST-10 dataset \cite{deng2012mnist}: MIT Licence
\end{itemize}

\subsection{Dataset splits}
In all experiments we will split the dataset into 4 sets: $Z_{\mathrm{train}}$ the training set, $Z_{\mathrm{test}}$ the testing set, $Z_{\mathrm{validation}}$ the validation set, $Z_{\mathrm{true}}$ a large sample set that is used to approximate the complete distribution, and $\mathcal{Z}_{\mathrm{trainsuffix}}$ which is an additional set that we use in a special way when we evaluate our bounds. We detail the use of each split in the following sections; refer to Table~\ref{tab:splits} for the specifics of splits for each dataset.

\begin{table}[h!]
\begin{center}
\begin{tabular}{lccccl}\toprule
           & $Z_{\mathrm{train}}$  & $Z_{\mathrm{test}}$ & $ Z_{\mathrm{validation}}$ & $Z_{\mathrm{trainsuffix}}$  & $Z_{\mathrm{true}}$ \\ \midrule
Abalone    & 751 & 835 & 418 & 84 & 2000 \\
KC\_House & 3923 & 4323 & 2161 & 400 & 10406 \\
Diamonds & 9788 & 10788 & 5394 & 1000 & 25970 \\
MNIST-10   & 50000 & 10000 & 5000 & 5000 & 10000 \\\bottomrule
\end{tabular}
\end{center}
\caption{In this table we detail the number of samples that we add to each set of our split, for each dataset. We aim to have a sufficiently high number of samples for the $Z_{\mathrm{true}}$. $Z_{\mathrm{trainsuffix}}$ is chosen to be approximately 10\% of the $Z_{\mathrm{train}}$ (note that $Z_{\mathrm{trainsuffix}}$ contains new samples). For the regression datasets our training set is approximately the same size as the testing set which is not a common setup in classification. However, our aim is not to obtain the best training and testing error but to investigate the behaviour of our models for varying $\lambda$.\label{tab:splits}
}
\end{table}

\subsection{Models}
For our regression datasets we use a fully connected network with two hidden layers with 100 neurons each and the ReLU non-linearity. We train our networks to minimize the Mean Square Error (MSE) loss. 

For our classification dataset MNIST-10 we use the LeNET \cite{Lecun726791} architecture. We train our networks using the softmax activation and the cross-entropy loss.

\subsection{Evaluation of bounds}

\subsubsection{All approximate bound evaluation $\mathcal{B}_{\mathrm{approximate}}$}
We need to evaluate the following bound

\begin{equation}
\begin{split}
&\bE_{\bw\sim\hat{\rho}}\mathcal{L}_{\mathcal{D}}^{\ell_{\mathrm{nll}}}(\bw) \leq \\
&\textcolor{red}{\underbrace{\frac{\Vert\by-f(\bX;\bw_{\hat{\rho}})\Vert^2_2}{2n}+\left(\frac{1}{\frac{\lambda h}{d}+\frac{1}{\sigma_{\pi}^2}}\right) \frac{h}{2n} +\frac{1}{2}\ln(2\pi)}_{\text{Empirical Risk}}} + \textcolor{Green}{\underbrace{\frac{\sigma_{\bx}^2(\sigma_{\pi}^2d+\Vert\bw_*-\bw_{\pi}\Vert_2^2)}{2-2\lambda n \sigma_{\bx}^2\sigma_{\pi}^2}+\frac{\sigma_{\epsilon}^2}{2}}_{\text{Moment}}}\\
&+\textcolor{blue}{\underbrace{\frac{1}{\lambda n} \left[\frac{1}{2}\left(\frac{d}{\sigma_{\pi}^2}  \frac{1}{\frac{\lambda h}{d}+\frac{1}{\sigma_{\pi}^2}} + \frac{1}{\sigma^2_{\pi}}\Vert\bw_{\hat{\rho}}-\bw_{\pi}\Vert^2_2 -d-d \ln\frac{1}{\frac{\lambda h}{d}+\frac{1}{\sigma_{\pi}^2}}+d \ln\sigma_{\pi}^2 \right) +\ln\frac{1}{\delta}\right]}_{\text{KL}}}. \\
\end{split}
\end{equation}

To estimate the bound we need to measure the following quantities

\begin{itemize}
\item $h = \left[\sum_{i}\sum_{j}(\nabla_{\bw}f(\bx_i;\bw_{\hat{\rho}})_j)^2\right]$ the curvature at the minimum. Note how this corresponds to the the sum of the squared gradients per data sample. We can compute this term using the Laplace-Redux package \cite{daxberger2021laplace} which has as a backend the BackPACK package \cite{dangel2019backpack}.
\item $\bw_{\pi}$ and $\bw_{\hat{\rho}}$ the prior and posterior means. We can also compute these terms explicitly. We typically train a deterministic neural network with SGD on $\mathcal{Z}_{\mathrm{train}}$ to obtain a MAP estimate $\bw_{\pi}$ then we also set $\bw_{\hat{\rho}}=\bw_{\pi}$, such that $\Vert\bw_{\hat{\rho}}-\bw_{\pi}\Vert^2_2=0$ in the KL term. This typically makes bounds tighter and is valid so long as we evaluate the other terms in the bound on $\mathcal{Z}_{\mathrm{trainsuffix}}$. 
\item $\sigma^2_{\bx}$ the per weight variance of the per-sample gradients. We estimate these using the data split reserved for approximating the full distribution $\mathcal{Z}_{\mathrm{true}}$. We estimate this quantity as $\sigma^2_{\bx} = \frac{\sum_{i\in\mathcal{Z}_{\mathrm{true}}}\sum_{j}(\nabla_{\bw}f(\bx_i;\bw_{\hat{\rho}})_j)^2}{\#\mathcal{Z}_{\mathrm{true}}\#\mathrm{weights}}$. Note that we do the above instead of actually fitting a Gaussian mixture on the gradients which would be tedious and error prone.
\item $\Vert\by-f(\bX;\bw_{\hat{\rho}})\Vert^2_2$ the MSE of the MAP classifier.
\item $\sigma^2_{\epsilon}$ the aleatoric uncertainty of the data. While we could estimate this using for example a Gaussian Process, since it is just a small constant we set it to be $\sigma^2_{\epsilon}=1$ in all experiments
\item $\Vert\bw_*-\bw_{\pi}\Vert_2^2$ the $\ell_2$ norm of the difference between the weights of the oracle function that generated the labels $\bw_*$, and our prior mean $\bw_{\pi}$. The oracle quantity $\bw_*$ is unknown. Setting $\Vert\bw_*-\bw_{\pi}\Vert_2^2\approx\Vert\bw_{\hat{\rho}}-\bw_{\pi}\Vert_2^2=\Vert\bw_{\pi}-\bw_{\pi}\Vert_2^2=0$ might be too optimistic so instead we set $\Vert\bw_*-\bw_{\pi}\Vert_2^2\approx\Vert\bw_{\hat{\rho}}\Vert_2^2=\Vert\bw_{\pi}\Vert_2^2$. (Remember that we set $\bw_{\hat{\rho}}=\bw_{\pi}$ to make the bound tighter.)
\end{itemize}

The values of the following variables can be set when evaluating the bound:
\begin{itemize}
\item $\sigma^2_{\pi}$ the prior variance.
\item $d$ the number of weights in the model.
\item $\lambda$ the temperature parameter.
\item $\delta$ the confidence of the bound, we typically use $\delta=0.05$.
\end{itemize}

\subsubsection{Mixed bound evaluation $\mathcal{B}_{\mathrm{mixed}}$}

We need to evaluate the following bound

\begin{equation}
\begin{split}
&\bE_{\bw\sim\hat{\rho}}\mathcal{L}_{\mathcal{D}}^{\ell_{\mathrm{nll}}}(\bw) \leq \\
&\textcolor{red}{\underbrace{\frac{\Vert\by-f(\bX;\bw_{\hat{\rho}})\Vert^2_2}{2n}+\left(\frac{1}{\frac{\lambda h}{d}+\frac{1}{\sigma_{\pi}^2}}\right) \frac{h}{2n} +\frac{1}{2}\ln(2\pi)}_{\text{Empirical Risk}}} \\
&+ \textcolor{Green}{\underbrace{\frac{1}{\lambda n}\ln \bE_{f\sim\pi}\bE_{(X',Y')\sim\mathcal{D}^n}\exp \left[\lambda n\left(\mathcal{L}_{\mathcal{D}}^{\ell_{\mathrm{nll}}}(f)- \hat{\mathcal{L}}_{X',Y'}^{\ell_{\mathrm{nll}}}(f)\right)  \right]}_{\text{Moment}}}\\
&+\textcolor{blue}{\underbrace{\frac{1}{\lambda n} \left[\frac{1}{2}\left(\frac{d}{\sigma_{\pi}^2}  \frac{1}{\frac{\lambda h}{d}+\frac{1}{\sigma_{\pi}^2}} + \frac{1}{\sigma^2_{\pi}}\Vert\bw_{\hat{\rho}}-\bw_{\pi}\Vert^2_2 -d-d \ln\frac{1}{\frac{\lambda h}{d}+\frac{1}{\sigma_{\pi}^2}}+d \ln\sigma_{\pi}^2 \right) +\ln\frac{1}{\delta}\right]}_{\text{KL}}} \\
\end{split}
\end{equation}

To estimate the bound we need to measure the same quantities as in the $\mathcal{B}_{\mathrm{approximate}}$ except for $\sigma^2_{\bx}, \sigma^2_{\epsilon}$ and $\Vert\bw_*-\bw_{\pi}\Vert_2^2$. In their place we need to estimate
$$\Psi_{\ell,\pi,\mathcal{D}}(\lambda,n)=\ln \bE_{f\sim\pi}\bE_{(X',Y')\sim\mathcal{D}^n}\exp \left[\lambda\left(\mathcal{L}_{\mathcal{D}}^{\ell_{\mathrm{nll}}}(f)- \hat{\mathcal{L}}_{X',Y'}^{\ell_{\mathrm{nll}}}(f)\right)  \right].$$ 
We can approximate this term as 
$$\Psi_{\ell,\pi,\mathcal{D}}(\lambda,n) \approx \ln \frac{1}{m}\sum_{f_i\sim\pi}\sum_{(X'_j,Y'_j)\sim\mathcal{D}^n}\exp \left[\lambda\left(\mathcal{L}_{\mathcal{D}}^{\ell_{\mathrm{nll}}}(f_i)- \hat{\mathcal{L}}_{X'_j,Y'_j}^{\ell_{\mathrm{nll}}}(f_i)\right)  \right]$$ 
by using Monte Carlo sampling. We use $m=100$ samples to approximate this term in all experiments. 

\subsubsection{Original bound evaluation $\mathcal{B}_{\mathrm{original}}$}

We need to evaluate the following bound

\begin{equation}
\begin{split}
&\bE_{\bw\sim\hat{\rho}}\mathcal{L}_{\mathcal{D}}^{\ell_{\mathrm{nll}}}(\bw) \leq \textcolor{red}{\underbrace{\bE_{f\sim\hat{\rho}}\hat{\mathcal{L}}_{X,Y}^{\ell_{\mathrm{nll}}}(f)}_{\text{Empirical Risk}}} + \textcolor{Green}{\underbrace{\frac{1}{\lambda n}\ln \bE_{f\sim\pi}\bE_{(X',Y')\sim\mathcal{D}^n}\exp \left[\lambda n\left(\mathcal{L}_{\mathcal{D}}^{\ell_{\mathrm{nll}}  }(f)- \hat{\mathcal{L}}_{X',Y'}^{\ell_{\mathrm{nll}}}(f)\right)  \right]}_{\text{Moment}}}\\
&+\textcolor{blue}{\underbrace{\frac{1}{\lambda n} \left[\frac{1}{2}\left(\frac{d}{\sigma_{\pi}^2}  \frac{1}{\frac{\lambda h}{d}+\frac{1}{\sigma_{\pi}^2}} + \frac{1}{\sigma^2_{\pi}}\Vert\bw_{\hat{\rho}}-\bw_{\pi}\Vert^2_2 -d-d \ln\frac{1}{\frac{\lambda h}{d}+\frac{1}{\sigma_{\pi}^2}}+d \ln\sigma_{\pi}^2 \right) +\ln\frac{1}{\delta}\right]}_{\text{KL}}} \\
\end{split}
\end{equation}

To estimate the bound we need to measure the same quantities as in the $\mathcal{B}_{\mathrm{mixed}}$ except for the empirical risk. We estimate this by sampling directly from the empirical loss using Monte Carlo sampling
$$\bE_{f\sim\hat{\rho}}\hat{\mathcal{L}}_{X,Y}^{\ell}(f)\approx\frac{1}{m}\sum_{f_i\sim\hat{\rho}}\hat{\mathcal{L}}_{X,Y}^{\ell}(f_i).$$ 
We use $m=100$ samples to approximate this term in all experiments.

\subsubsection{Additional notes on bound evaluation}
For the regression datasets We tested 20 different values for $\sigma^2_{\pi}$ in $[0.00001,0.1]$. The effect of increasing $\sigma^2_{\pi}$ is the same as Figure 3 of the main text. With increasing $\sigma^2_{\pi}$ the cold-posterior effect increases and we need higher values of $\lambda$ for optimal performance on the test set. We take the same values of $\sigma^2_{\pi}$ for the classification dataset MNIST-10 and observe the same effect as in the regression datasets. With increasing $\sigma^2_{\pi}$ the cold posterior effect increases.

We try to make our bounds as tight as possible. To do this we try to control the term $\Vert\bw_{\hat{\rho}}-\bw_{\pi} \Vert_2^2$ which typically dominates the bound. We follow for all tasks a variation of the approach of \cite{dziugaite2021role}. Specifically we use $\mathcal{Z}_{\mathrm{train}}$ to learn a prior mean $\bw_{\pi}$ then we set, $\bw_{\hat{\rho}}=\bw_{\pi}$, such that $\Vert\bw_{\hat{\rho}}-\bw_{\pi} \Vert_2^2=0$. Note that we can still evaluate a valid bound so long as we set $(X,Y)$ in Theorem 1 to be independent of the prior mean. This is the reason why we separated a part of the training set in the form of $\mathcal{Z}_{\mathrm{trainsuffix}}$. We thus set $(X,Y) = \mathcal{Z}_{\mathrm{trainsuffix}}$ in Theorem 1. All bounds ($\mathcal{B}_{\mathrm{original}}$, $\mathcal{B}_{\mathrm{mixed}}$, $\mathcal{B}_{\mathrm{approximate}}$) can then be evaluated by taking into account this substitution. Note that our final model deviates from what would be typically used in practice, but it shouldn't deviate significantly. Specifically our models are a modification of the commonly used Laplace approximation \cite{daxberger2021laplace}. For example, in the case of MNIST, we still use 50K samples for training, so our MAP estimate should be similar to typical applications of LeNet on MNIST. We only use $(X,Y) = \mathcal{Z}_{\mathrm{trainsuffix}}$ to learn the \emph{posterior variance} of a Laplace approximation, and in particular to estimate the curvature parameter $h$. This is also why it is difficult to apply our approach to other datasets. The MNIST dataset has a known extended version, EMNIST, from which we can draw extra samples to construct $\mathcal{Z}_{\mathrm{trainsuffix}}$ and $\mathcal{Z}_{\mathrm{true}}$ while still training on the full original training set. For other datasets (such as CIFAR-10) we are not aware of extended versions, and thus we would necessarily have to draw $\mathcal{Z}_{\mathrm{trainsuffix}}$ and $\mathcal{Z}_{\mathrm{true}}$ from $\mathcal{Z}_{\mathrm{train}}$.  

In our experiments we test multiple values of $\lambda$ and $\sigma_{\pi}^2$. Typically one would need to take a union bound over a grid on these parameters so as for the generalization bound to be valid \cite{dziugaite2017computing}. However this typically costs only logarithmically to the actual bound. We ignore these calculations as our bounds are in general quite loose anyway, and these calculations would result in additional terms would make the final bound even more complex. 

For the bounds to be valid, one would typically want to show concentration inequalities such that the Monte Carlo estimates of the Empirical Risk and the Moment terms concentrate close to the true expected value with high probability. We do not provide such guarantees. Note however that, at least for the Empirical Risk term, our sample size of $m=100$ from the posterior distribution over weights is a sample size that is typically used in practice and provides good estimates. Regarding the Moment term we typically use $m=100$. Specifically we sample $10$ samples $\bw_i\sim\pi$ and for each $\bw_i$ we sample $10$ samples from $X_j,Y_j\sim\mathcal{D}$. We have tried to balance sampling sufficiently to approximate the expectation on the one hand, and also not too much such that the computations become prohibitive.

\clearpage

\subsection{Additional regression results}
In Figure \ref{exp_fig_detailed:figure_full} we see more detailed experiments on the regression datasets Abalone, Diamonds and KC\_House. We see that for all the neural networks that we trained across all datasets the $\mathcal{B}_{\mathrm{approximate}}$ bound is very loose. Specifically for the cases we consider $\lambda$ is always restricted to be $\lambda<1$ which is very limiting since we want to investigate cold posteriors and $\lambda>1$. When comparing the $\mathcal{B}_{\mathrm{mixed}}$ and $\mathcal{B}_{\mathrm{original}}$ bounds we see that there is little change in the bound values.  Specifically estimating the Empirical Risk with Monte Carlo sampling, instead of using a Taylor expansion of second order (as in $\mathcal{B}_{\mathrm{approximate}}$) doesn't yield significant benefits. The big improvements are the result of estimating the Moment term using Monte Carlo sampling.

\begin{figure*}[h!]
\centering
\begin{subfigure}{.33\textwidth}
  \centering
  \includegraphics[width=\textwidth]{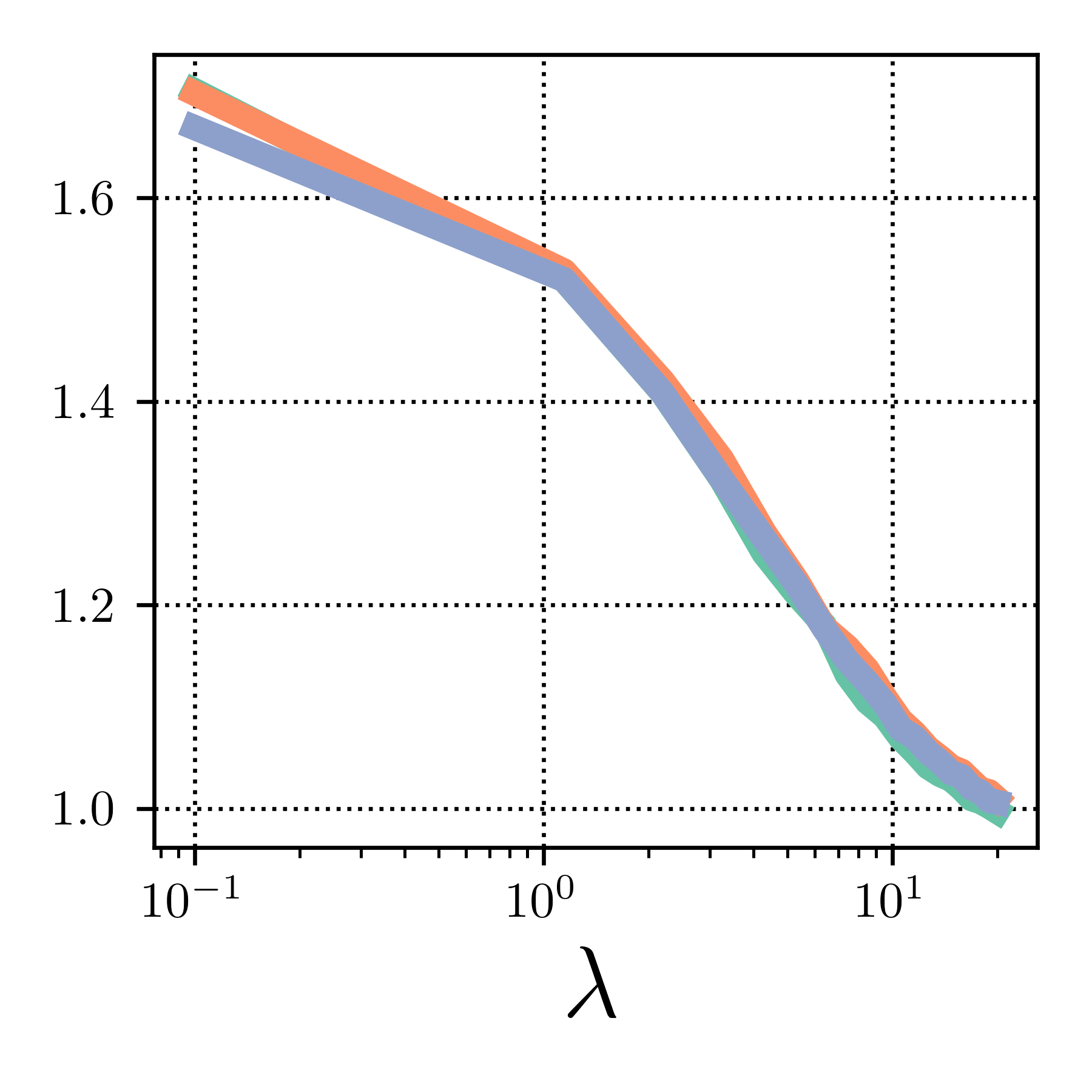}
\end{subfigure}%
\begin{subfigure}{.33\textwidth}
  \centering
  \includegraphics[width=\textwidth]{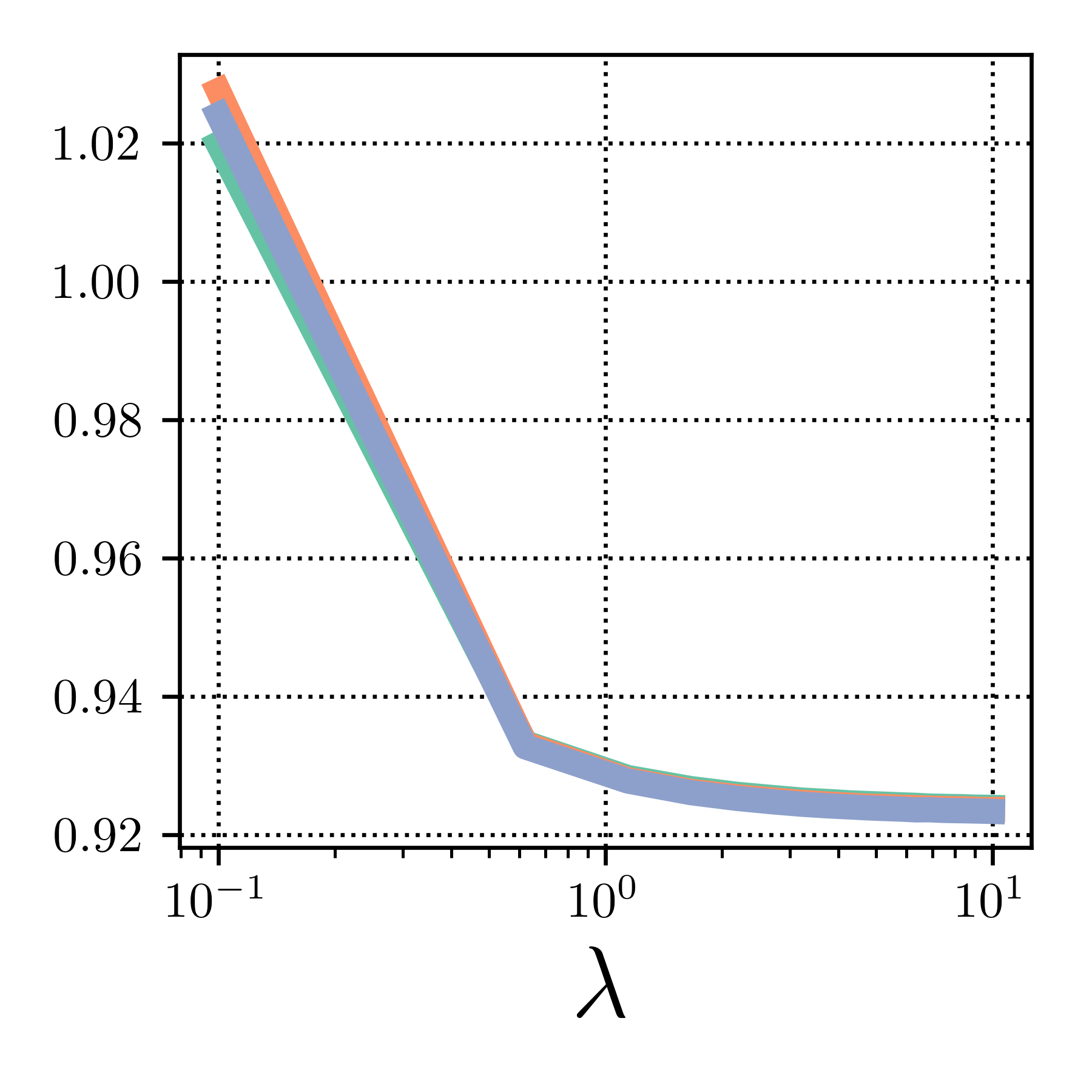}
\end{subfigure}%
\begin{subfigure}{.33\textwidth}
  \centering
  \includegraphics[width=\textwidth]{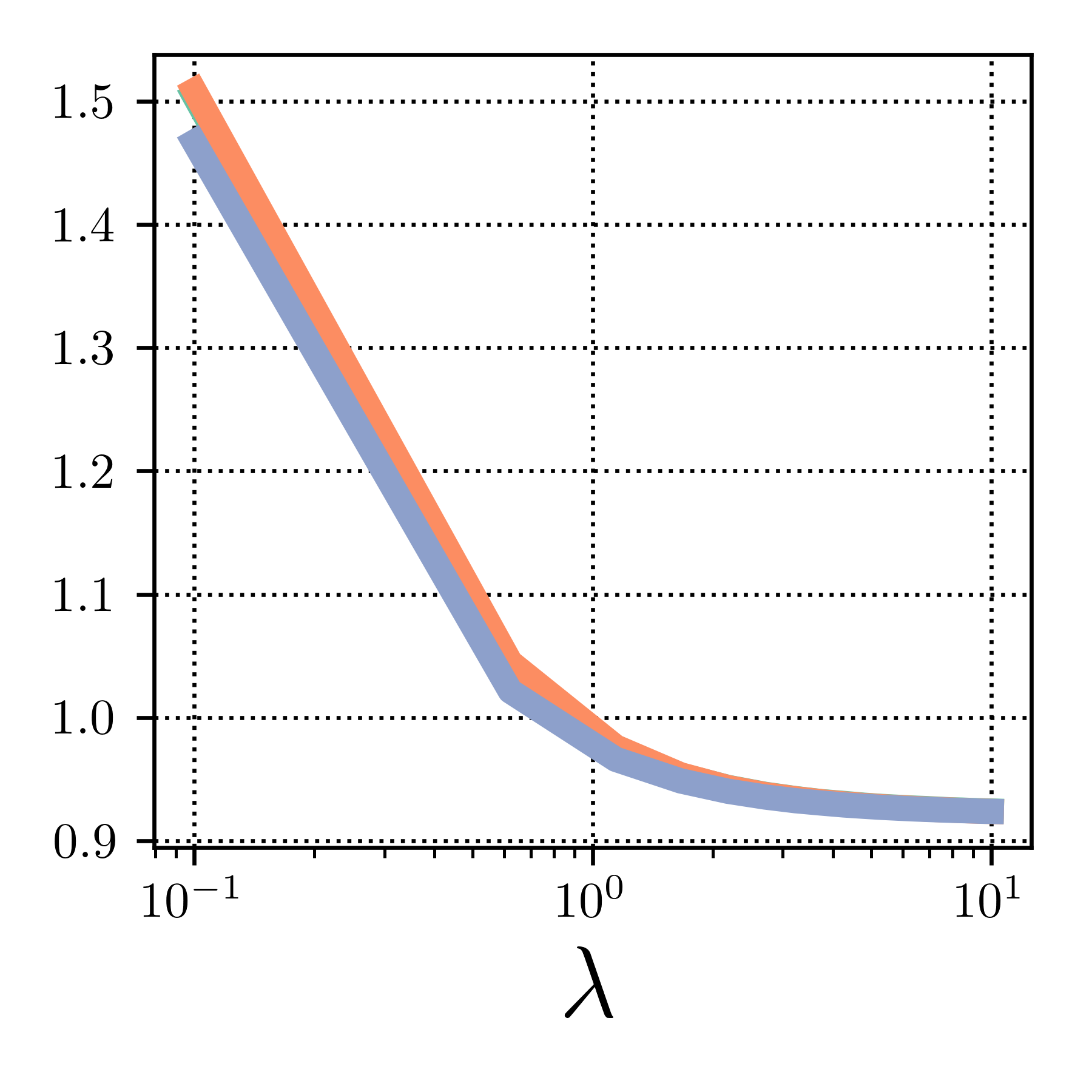}
\end{subfigure}%

\begin{subfigure}{.33\textwidth}
  \centering
  \includegraphics[width=\textwidth]{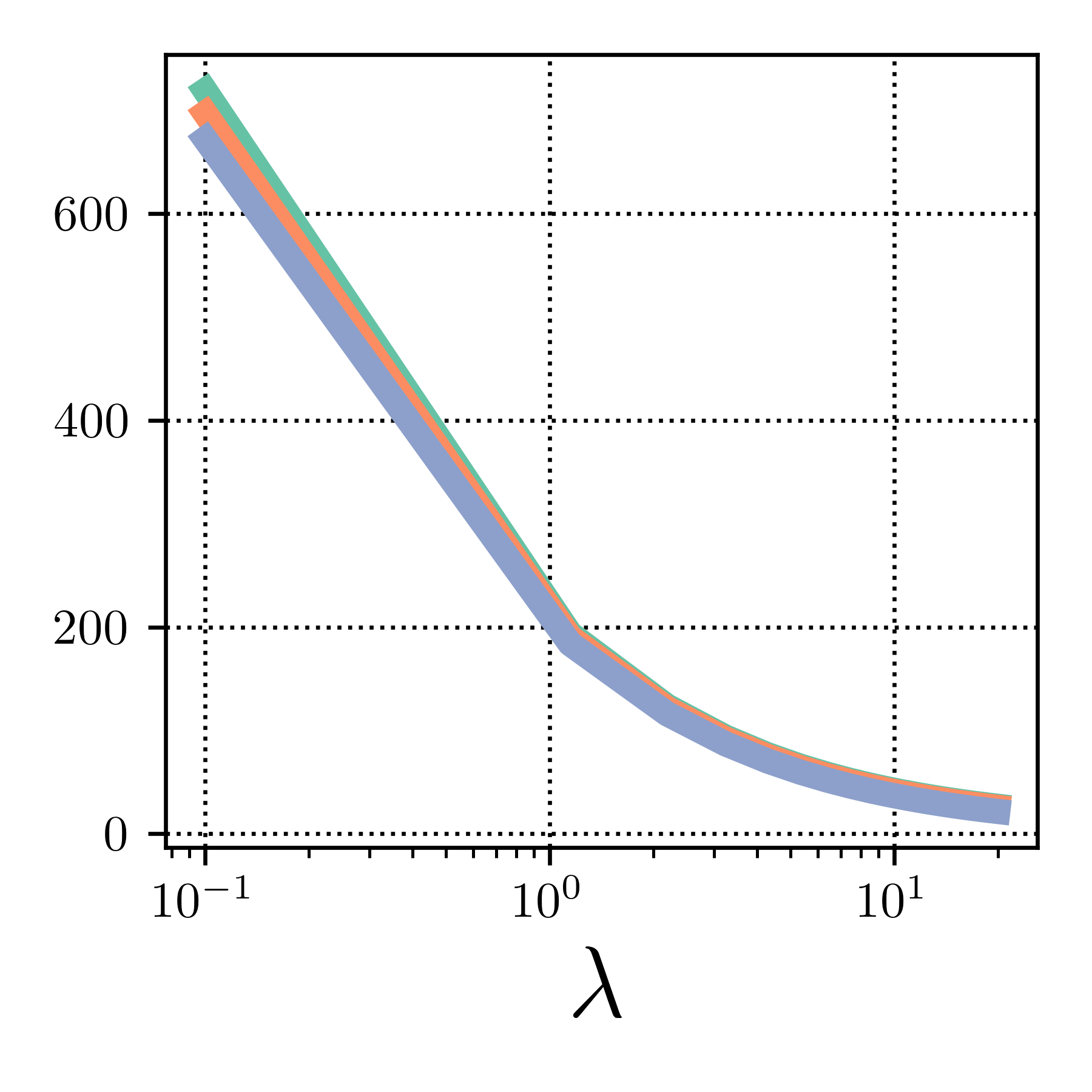}
\end{subfigure}%
\begin{subfigure}{.33\textwidth}
  \centering
  \includegraphics[width=\textwidth]{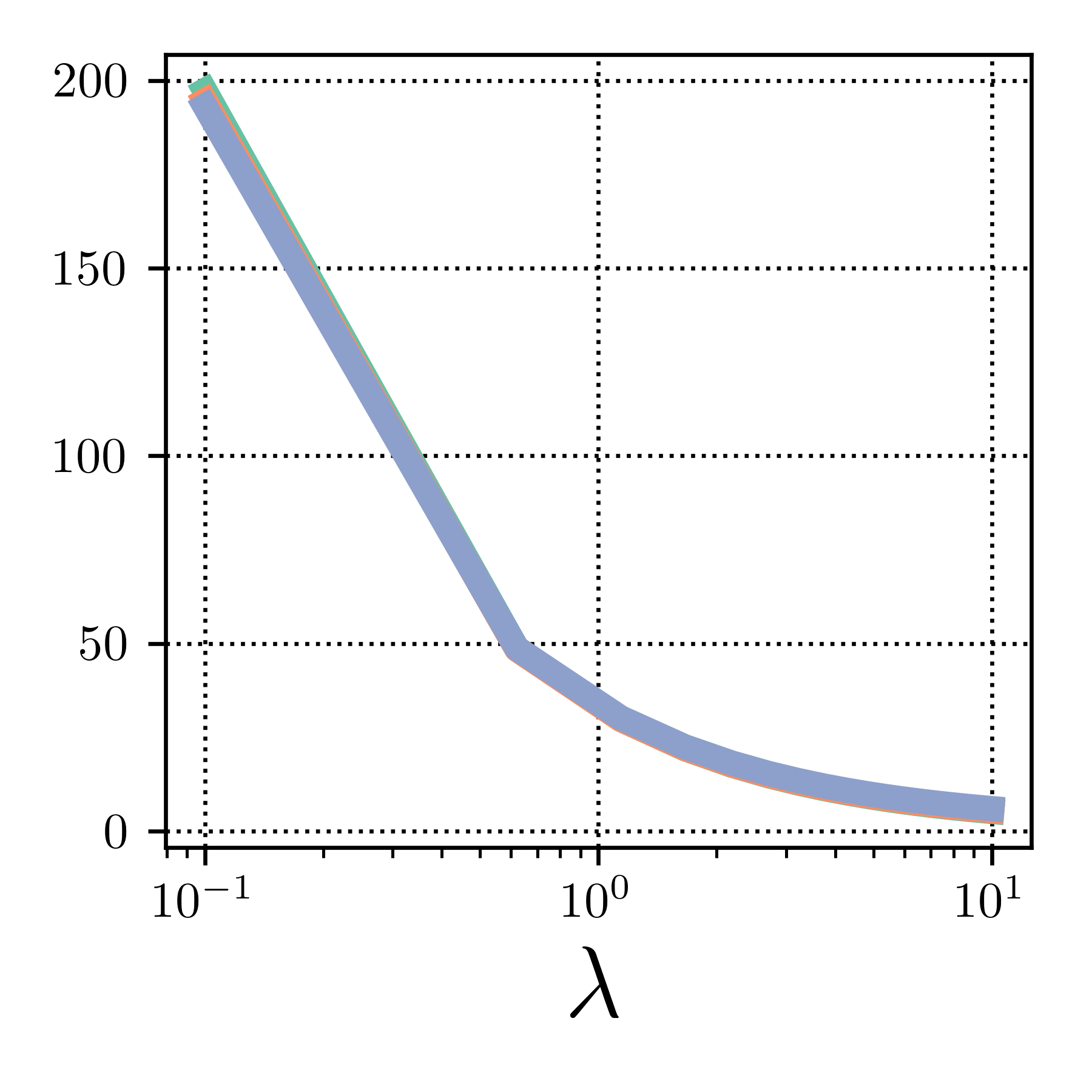}
\end{subfigure}%
\begin{subfigure}{.33\textwidth}
  \centering
  \includegraphics[width=\textwidth]{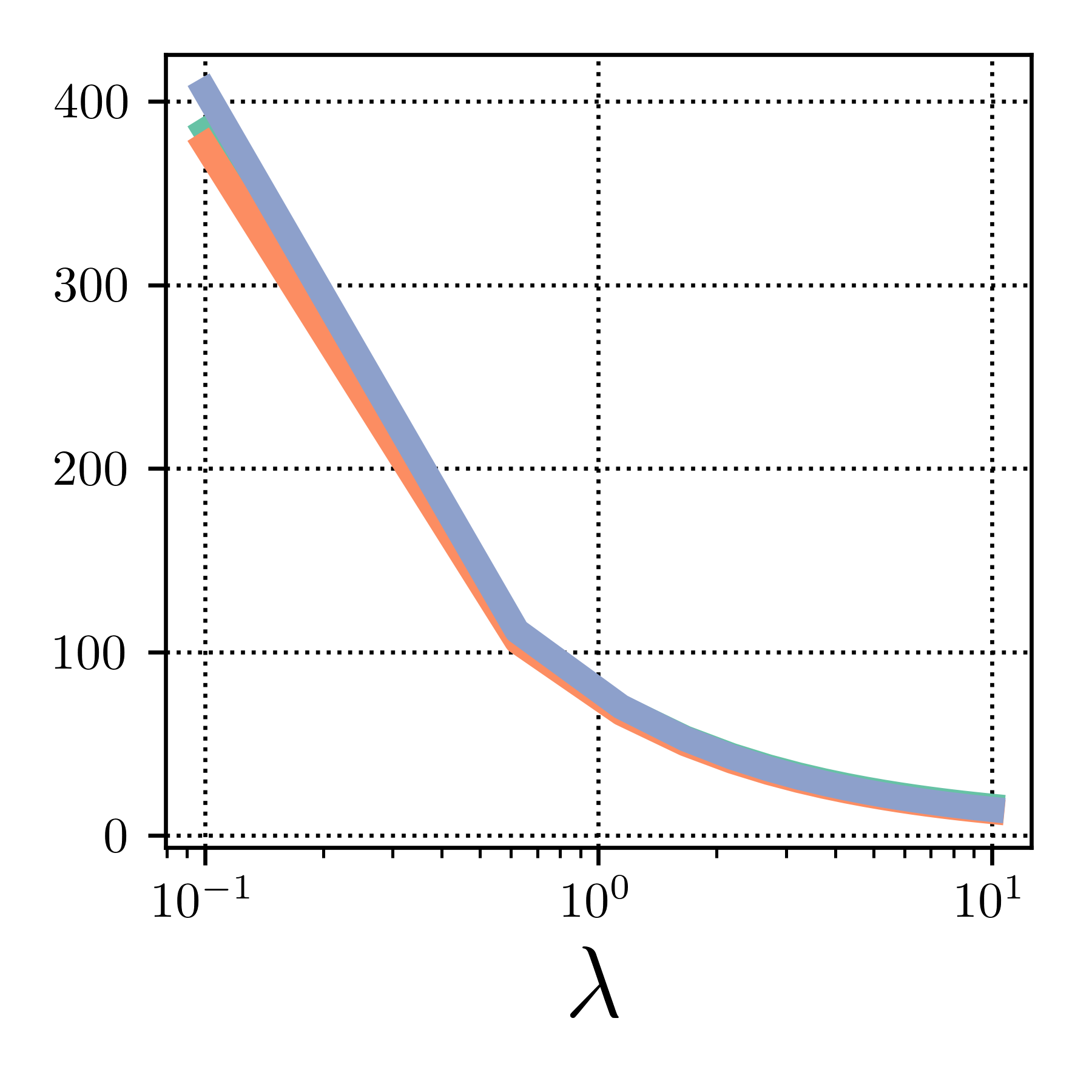}
\end{subfigure}%

\begin{subfigure}{.33\textwidth}
  \centering
  \includegraphics[width=\textwidth]{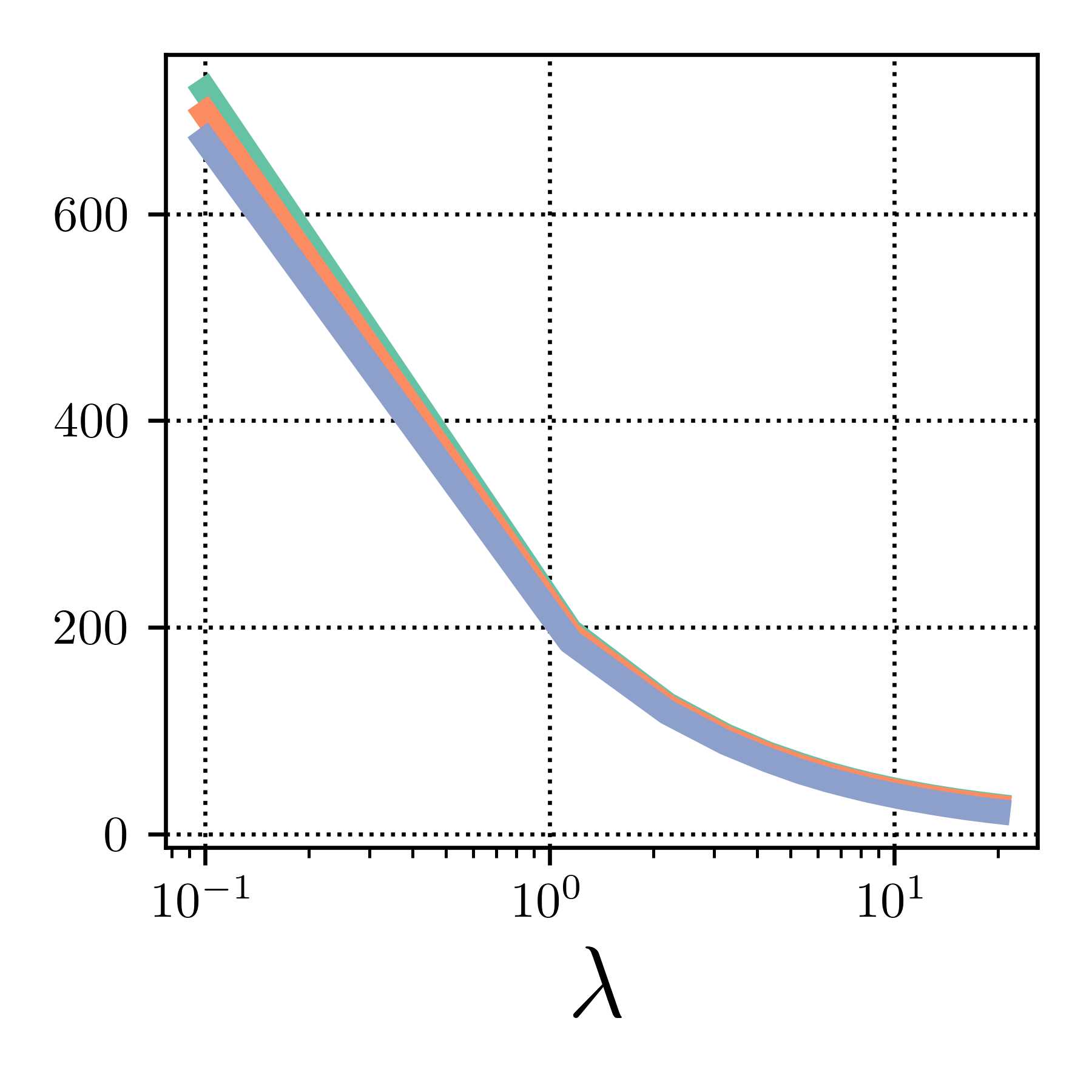}
\end{subfigure}%
\begin{subfigure}{.33\textwidth}
  \centering
  \includegraphics[width=\textwidth]{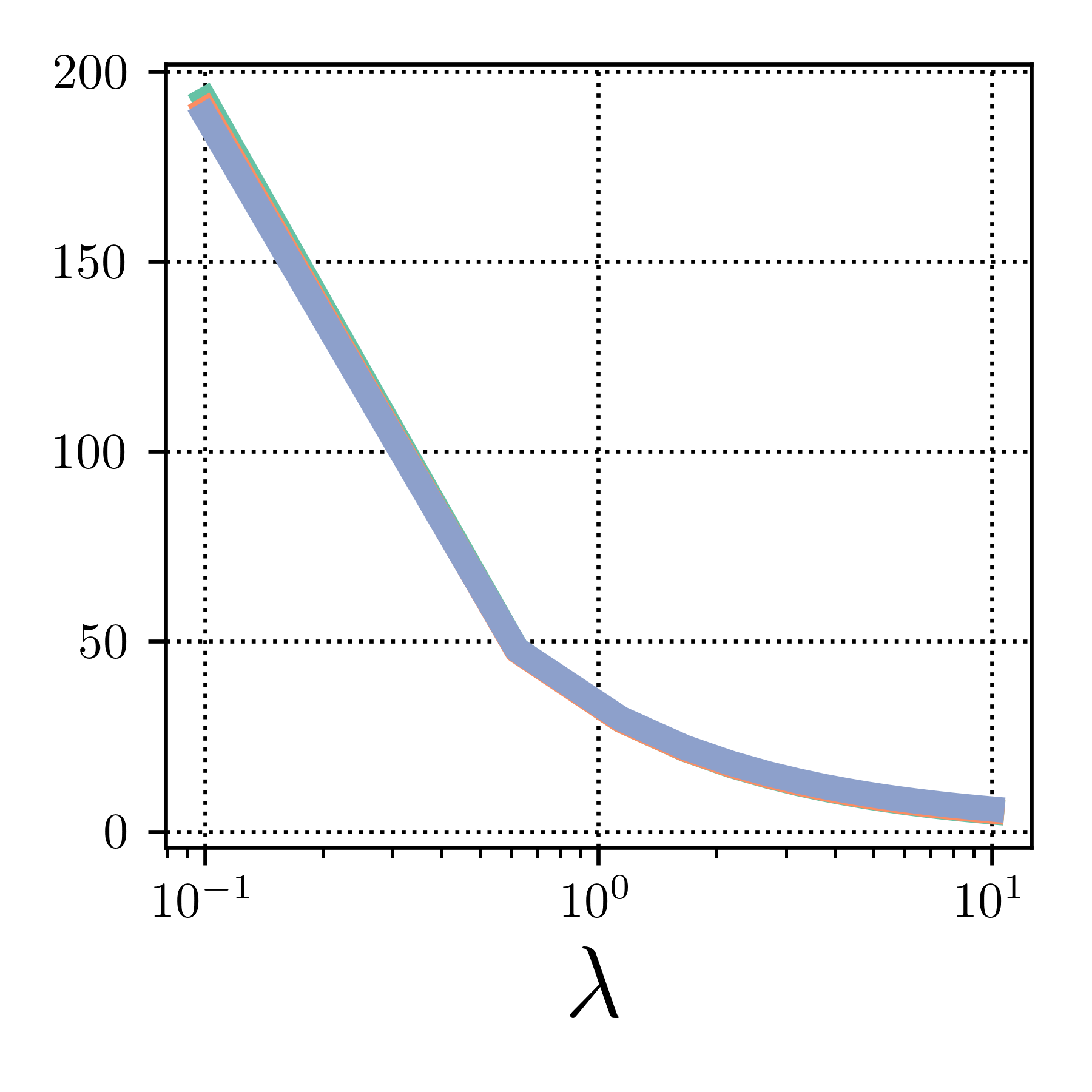}
\end{subfigure}%
\begin{subfigure}{.33\textwidth}
  \centering
  \includegraphics[width=\textwidth]{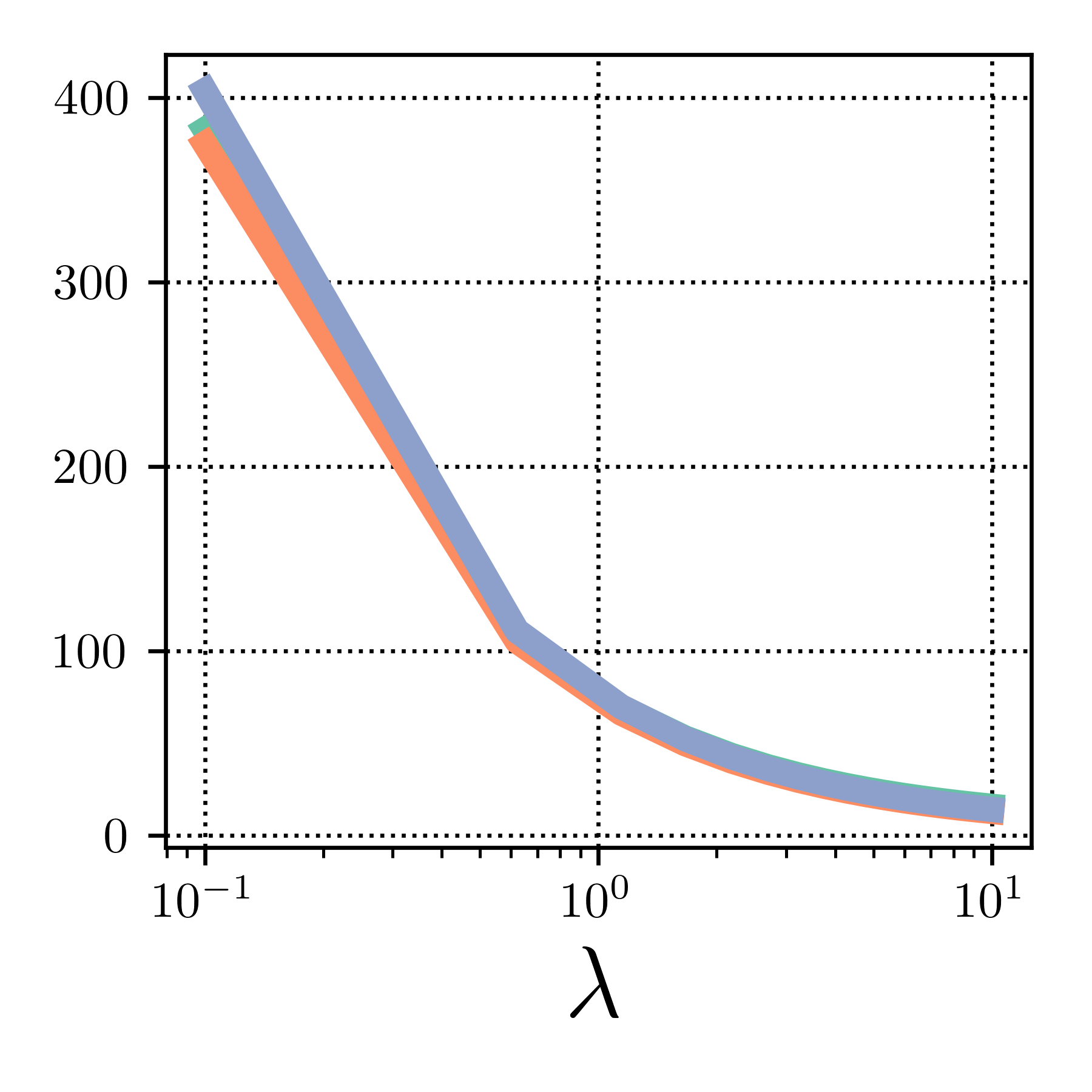}
\end{subfigure}%

\begin{subfigure}{.33\textwidth}
  \centering
  \includegraphics[width=\textwidth]{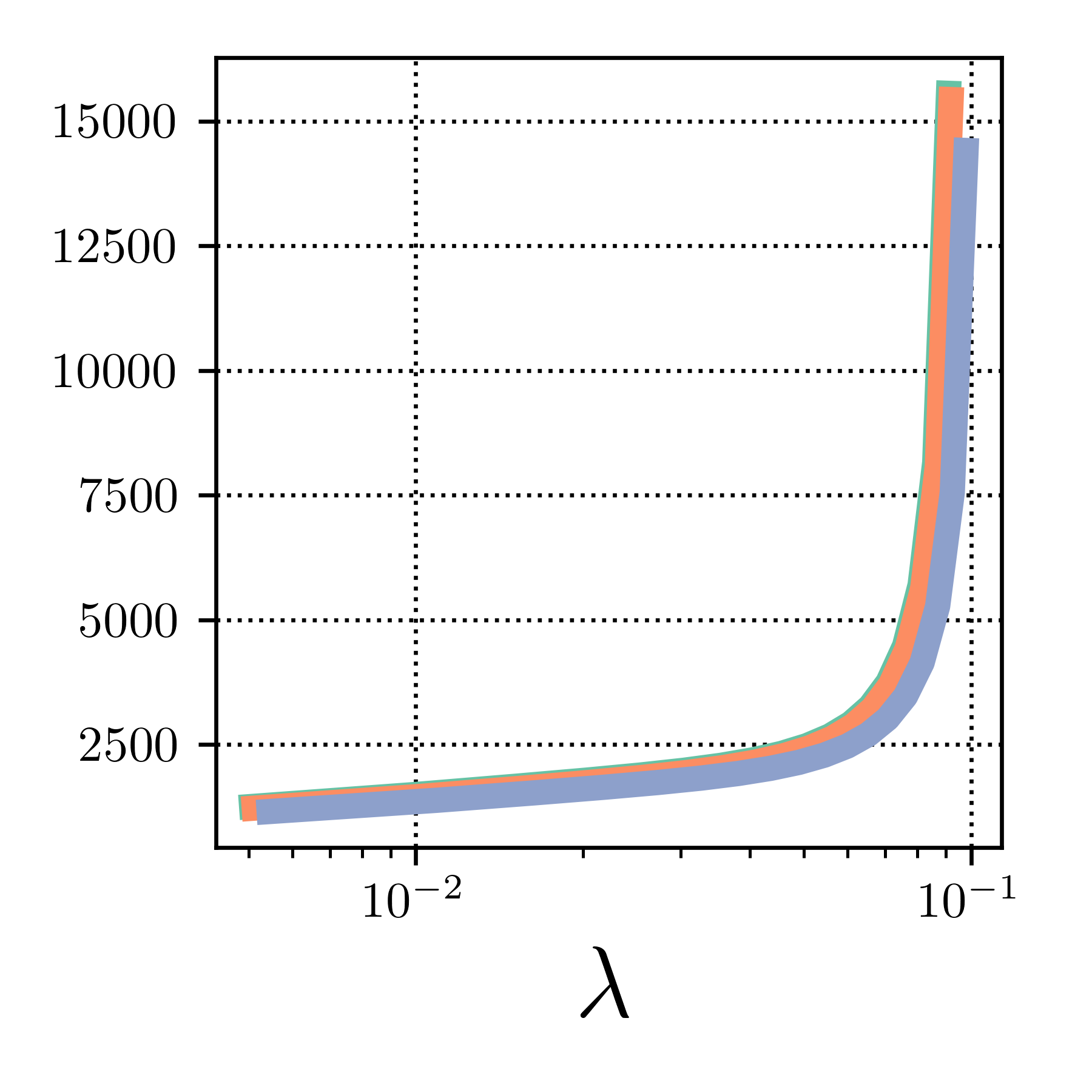}
  \caption{UCI, Abalone}
\end{subfigure}%
\begin{subfigure}{.33\textwidth}
  \centering
  \includegraphics[width=\textwidth]{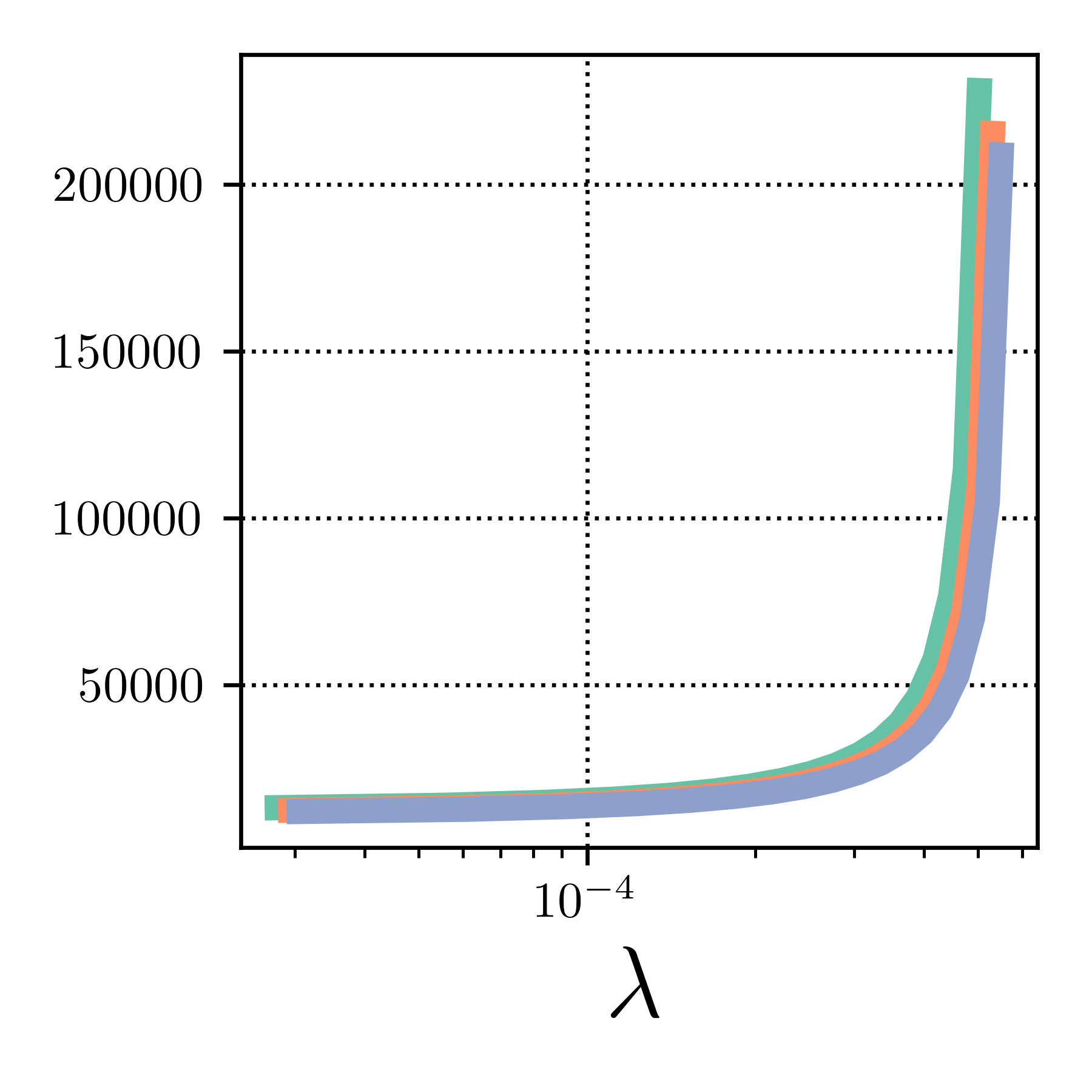}
  \caption{UCI, Diamonds}
\end{subfigure}%
\begin{subfigure}{.33\textwidth}
  \centering
  \includegraphics[width=\textwidth]{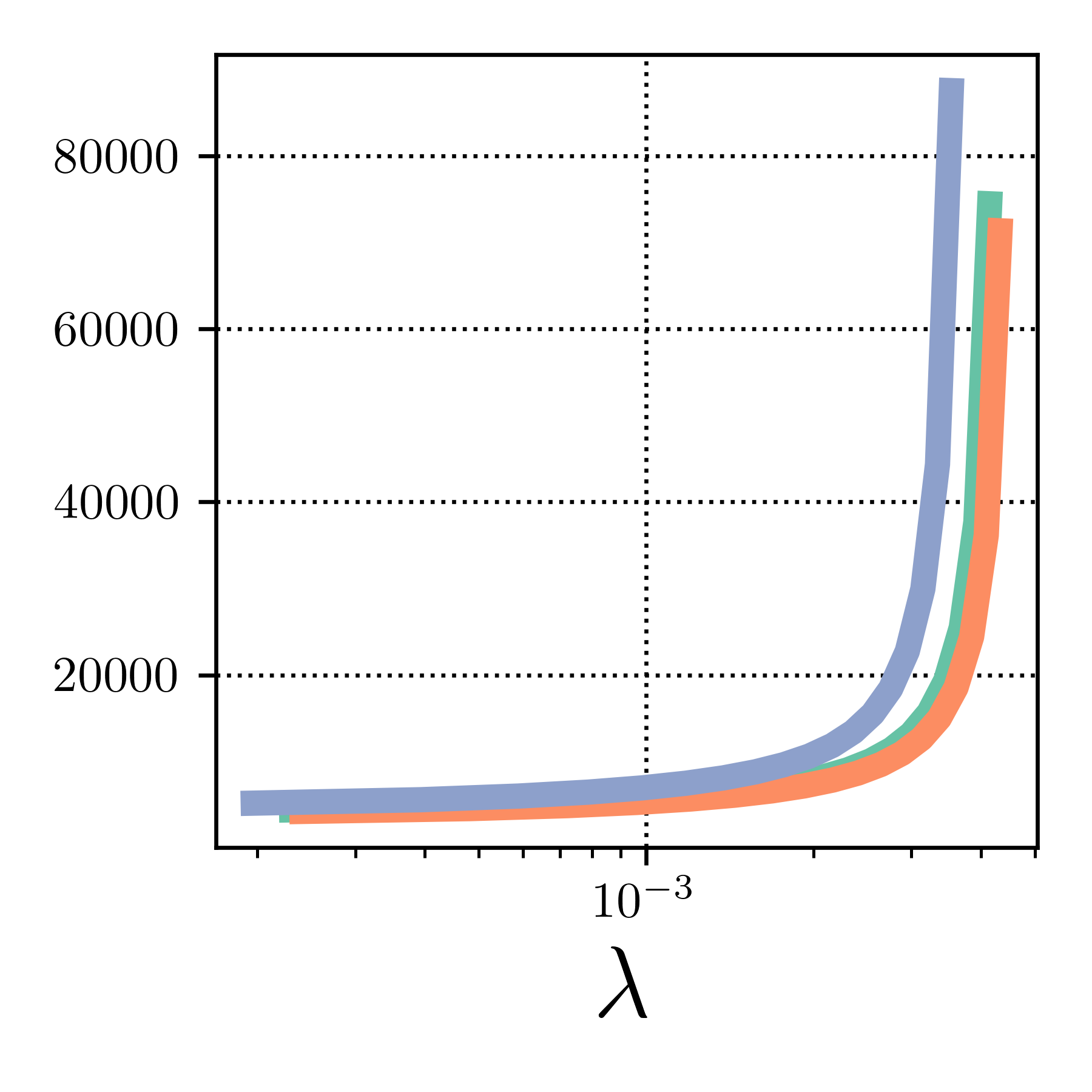}
  \caption{KC\_House}
\end{subfigure}%

  \caption{$\sigma^2_{\pi}=0.04737$. First row: Negative log-likelihood. Second row: PAC-Bayes original bound $\mathcal{B}_{\mathrm{original}}$. Third row: PAC-Bayes mixed bound $\mathcal{B}_{\mathrm{mixed}}$. Fourth row: PAC-Bayes approximate bound $\mathcal{B}_{\mathrm{approximate}}$ for varying $\lambda$. Different colours correspond to different MAP estimates. All quantities show very little variation around their mean, hence the significant overlap.}
  \label{exp_fig_detailed:figure_full}
\end{figure*}

\subsection{Additional classification results}
In Figure \ref{additional_metrics_mnist:figure_full} we see additional results for the MNIST-10 dataset. Specifically in addition to the Negative Log-Likelihood and the $\mathcal{B}_{\mathrm{original}}$ bound we also plot two other metrics, the Expected Calibration Error (ECE) \cite{naeini2015obtaining} and the Zero-One loss $\ell_{01}(f,\bx,y)=\mathbb{I}(\argmax (f(\bx))=y )$. We see that the ECE behaves similarly to the NLL and slowly improves with increasing $\lambda$. The misclassification rate (Zero-One loss) stays the same for all values of $\lambda$. In fact the noise we add to our parameters remains very small throughout all values of $\lambda$ and $\sigma_{\pi}^2$. Notice that in our setting the posterior precision is given by

$$\frac{1}{\sigma^2_{\hat{\rho}}}=\frac{\lambda h}{d}+\frac{1}{\sigma_{\pi}^2}.$$ 

In the above $\frac{h}{d}$ is empirically quite large $\frac{h}{d}\approx \mathcal{O}(10^4)$. Therefore even if we increase $\sigma_{\pi}^2$ significantly, we cannot increase $\sigma^2_{\hat{\rho}}$ to more than $\frac{1}{\frac{\lambda h}{d}}$. This explains why the NLL and the ECE change, while the Zero-One Loss remains constant. We are adding enough noise to change the probabilities of each class at the output of the neural network, but not enough noise to change the predicted class for each signal. Note also that our PAC-Bayes bounds including $\mathcal{B}_{\mathrm{original}}$ are an upper bound on the test NLL, and up to an unknown constant on $\mathrm{KL}(p_{\mathcal{D}}(y|\bx)\Vert \bE_{f\sim\hat{\rho}}[p(y|\bx,f)])$ (which the ECE can be seen to measure). Thus it is reasonable to observe the $\mathcal{B}_{\mathrm{original}}$ track the NLL and the ECE, and not the Zero-One Loss. Furthermore in other applications of the Laplace approximation such as in \citet{daxberger2021laplace,ritter2018scalable}, Laplace appoximations to the posterior typically have similar misclassification rate as their deterministic counterpart while the metrics that improve are the NLL and the ECE.

\begin{figure*}[h!]
\centering
\begin{subfigure}{.25\textwidth}
  \centering
  \includegraphics[width=\textwidth]{img/experiments/classification/MNIST/mnist_nll.png}
  \caption{NLL}
   \label{additional_metrics_mnist:figure1}
\end{subfigure}%
\begin{subfigure}{.25\textwidth}
  \centering
  \includegraphics[width=\textwidth]{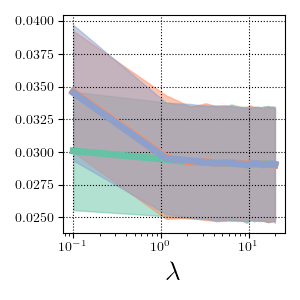}
  \caption{ECE}
   \label{additional_metrics_mnist:figure2}
\end{subfigure}%
\begin{subfigure}{.25\textwidth}
  \centering
  \includegraphics[width=\textwidth]{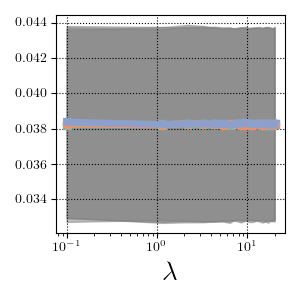}
  \caption{Zero-One Loss}
   \label{additional_metrics_mnist:figure3}
\end{subfigure}%
\begin{subfigure}{.25\textwidth}
  \centering
  \includegraphics[width=\textwidth]{img/experiments/classification/MNIST/mnist_original.png}
  \caption{$\mathcal{B}_{\mathrm{original}}$}
   \label{additional_metrics_mnist:figure4}
\end{subfigure}%

  \caption{The test NLL (\ref{additional_metrics_mnist:figure1}), ECE (\ref{additional_metrics_mnist:figure2}), Zero-One Loss (\ref{additional_metrics_mnist:figure3}), and the $\mathcal{B}_{\mathrm{original}}$  (\ref{additional_metrics_mnist:figure4}) bounds for different values of $\lambda$ and $\sigma_{\pi}^2$. We see that the ECE behaves similarly to the NLL and slowly improves with increasing $\lambda$. The misclassification rate (Zero-One loss) stays the same for all values of $\lambda$. In fact the noise we add to our parameters remains very small throughout all values of $\lambda$ and $\sigma_{\pi}^2$.} 
  \label{additional_metrics_mnist:figure_full}
\end{figure*}

\subsection{Additional results on bound terms}
We now present additional results on the behaviour of the different terms (Empirical Risk, KL, Moment) of the different bounds ($\mathcal{B}_{\mathrm{approximate}}$ , $\mathcal{B}_{\mathrm{mixed}}$, $\mathcal{B}_{\mathrm{original}}$). We plot the results in Figure \ref{bound_terms:figure_full}. We see that across all cases the $\mathcal{B}_{\mathrm{approximate}}$ bound is significantly off scale in the x-axis. For our realistic choices of $\sigma_{\pi}^2$ the parameter $\lambda$ is restricted to be $\lambda<1$, which is very limiting for the setting we want to investigate $\lambda\geq1$. However the bound gives some useful intuition regarding how the terms vary as we change $\lambda$. Across all datasets the Empirical Risk decreases as we increase $\lambda$, the KL term also increases, while the Moment term increases, but with a much slower rate than is implied by $\mathcal{B}_{\mathrm{approximate}}$. Comparing the $\mathcal{B}_{\mathrm{mixed}}$ and $\mathcal{B}_{\mathrm{original}}$ bounds we see that we get a significantly tighter estimate of the Empirical Risk. However this doesn't improve significantly the bound, in that the KL term is orders of magnitude larger than the other terms. Contrary to the main text we plot the original values for all quantities but on logarithmic scale when necessary. (In the main text we first normalize the KL and then add it to the other terms, which gives a result that is a little more difficult to interpret). The results for each dataset are for a random choice of $\sigma_{\pi}^2$.

\begin{figure*}[t!]
\centering
\begin{subfigure}{\textwidth}
\centering
\begin{subfigure}{.33\textwidth}
  \centering
  \includegraphics[width=\textwidth]{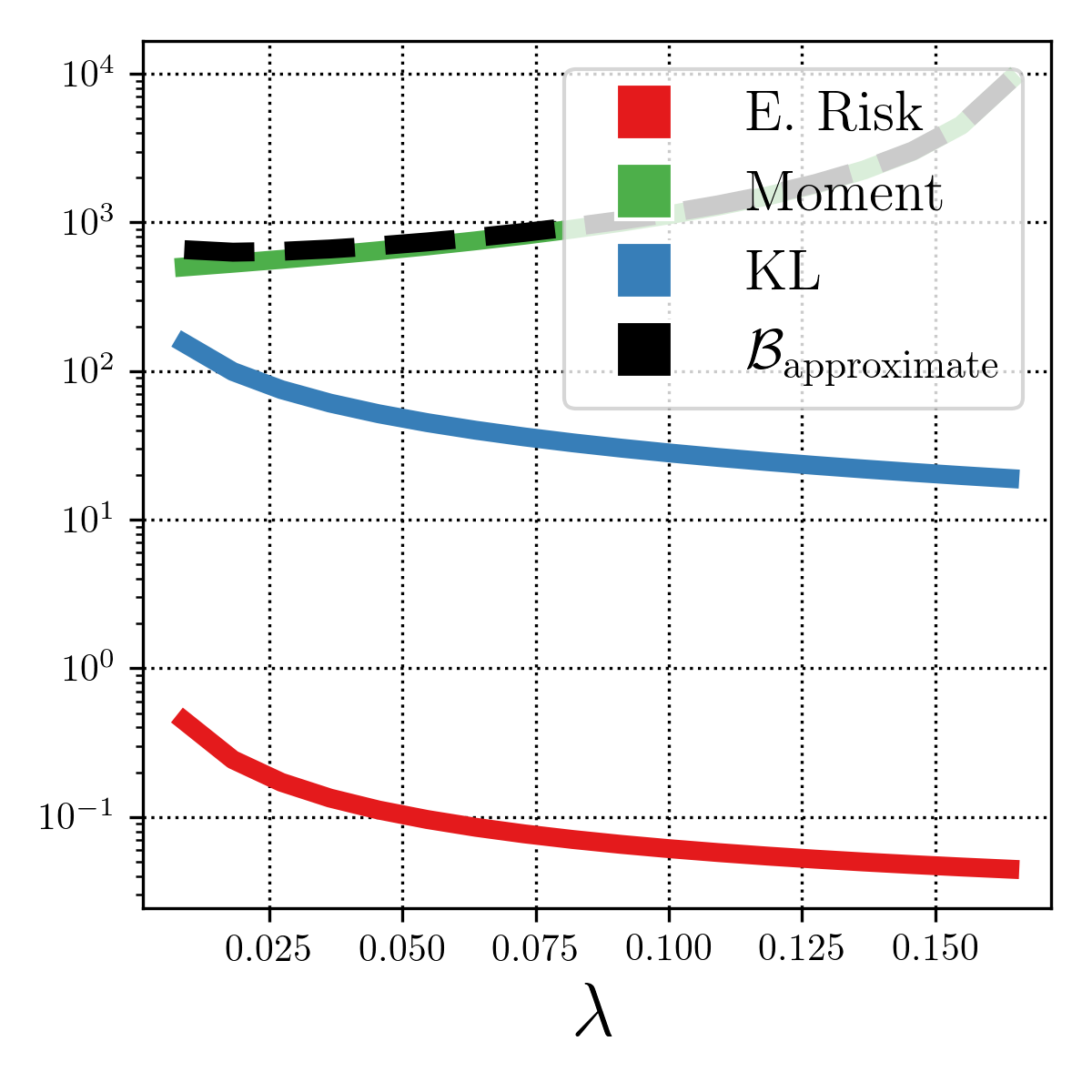}
\end{subfigure}%
\begin{subfigure}{.33\textwidth}
  \centering
  \includegraphics[width=\textwidth]{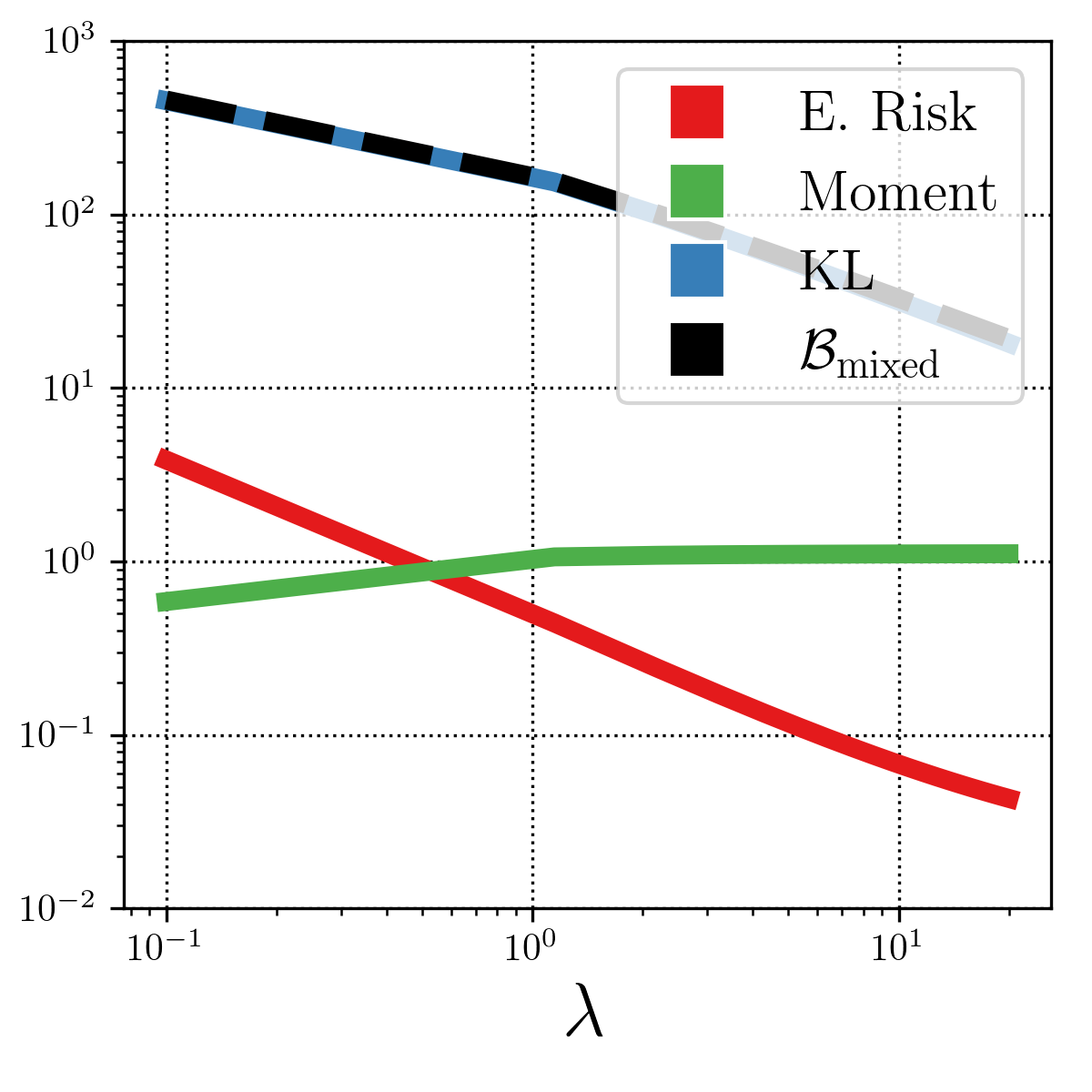}
\end{subfigure}%
\begin{subfigure}{.33\textwidth}
  \centering
  \includegraphics[width=\textwidth]{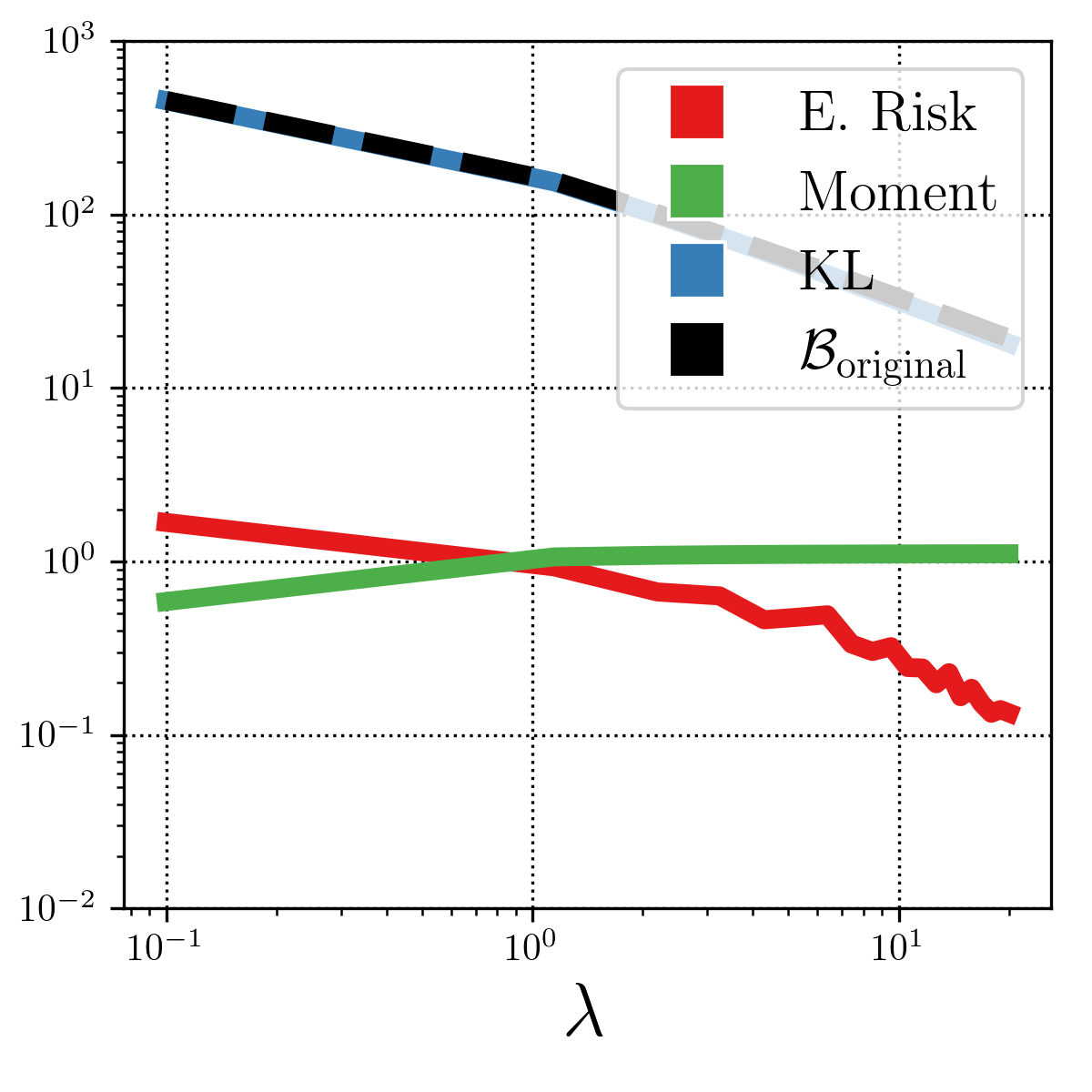}
\end{subfigure}%
\caption{Abalone}
\end{subfigure}

\begin{subfigure}{\textwidth}
\centering
\begin{subfigure}{.33\textwidth}
  \centering
  \includegraphics[width=\textwidth]{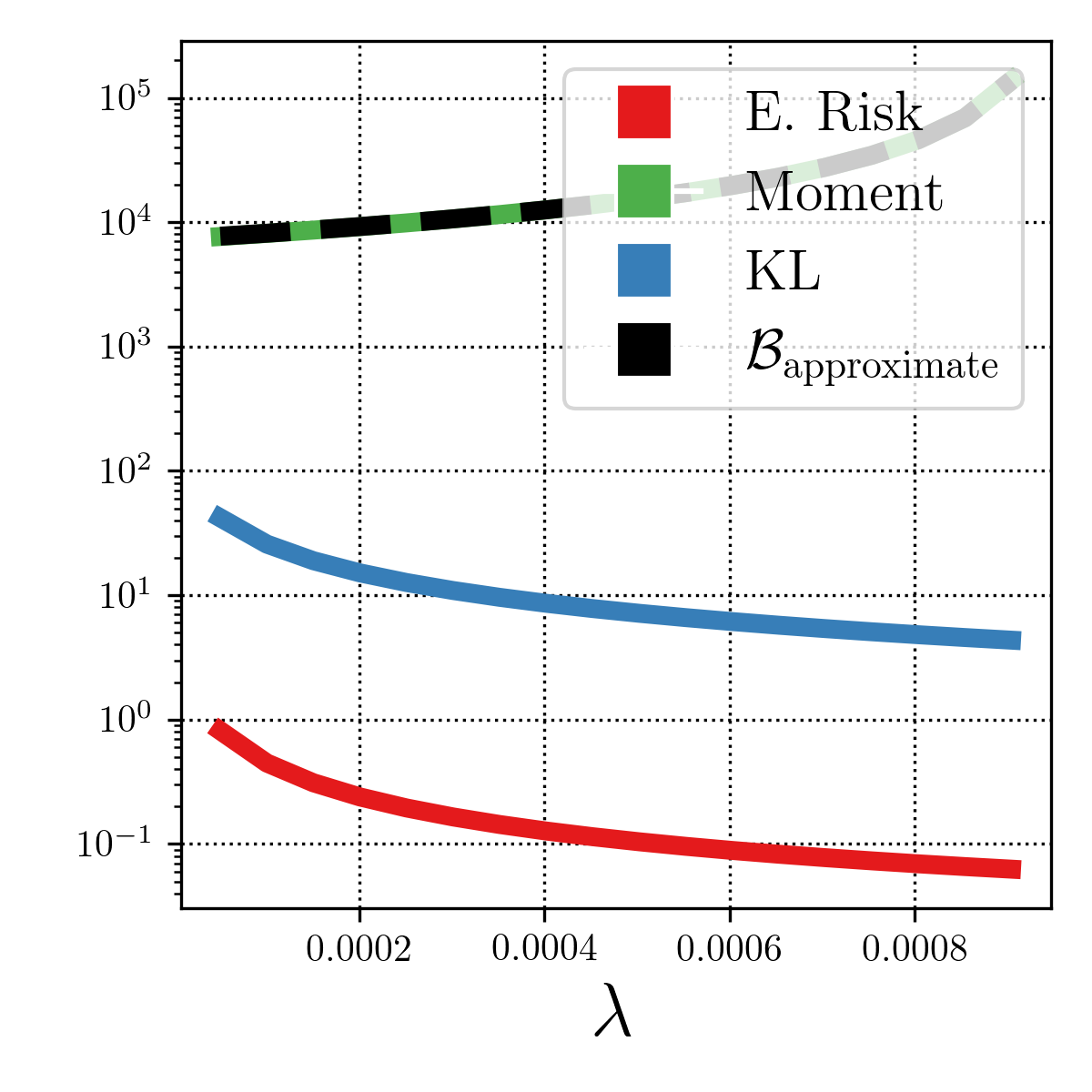}
   \label{bound_terms:figure1}
\end{subfigure}%
\begin{subfigure}{.33\textwidth}
  \centering
  \includegraphics[width=\textwidth]{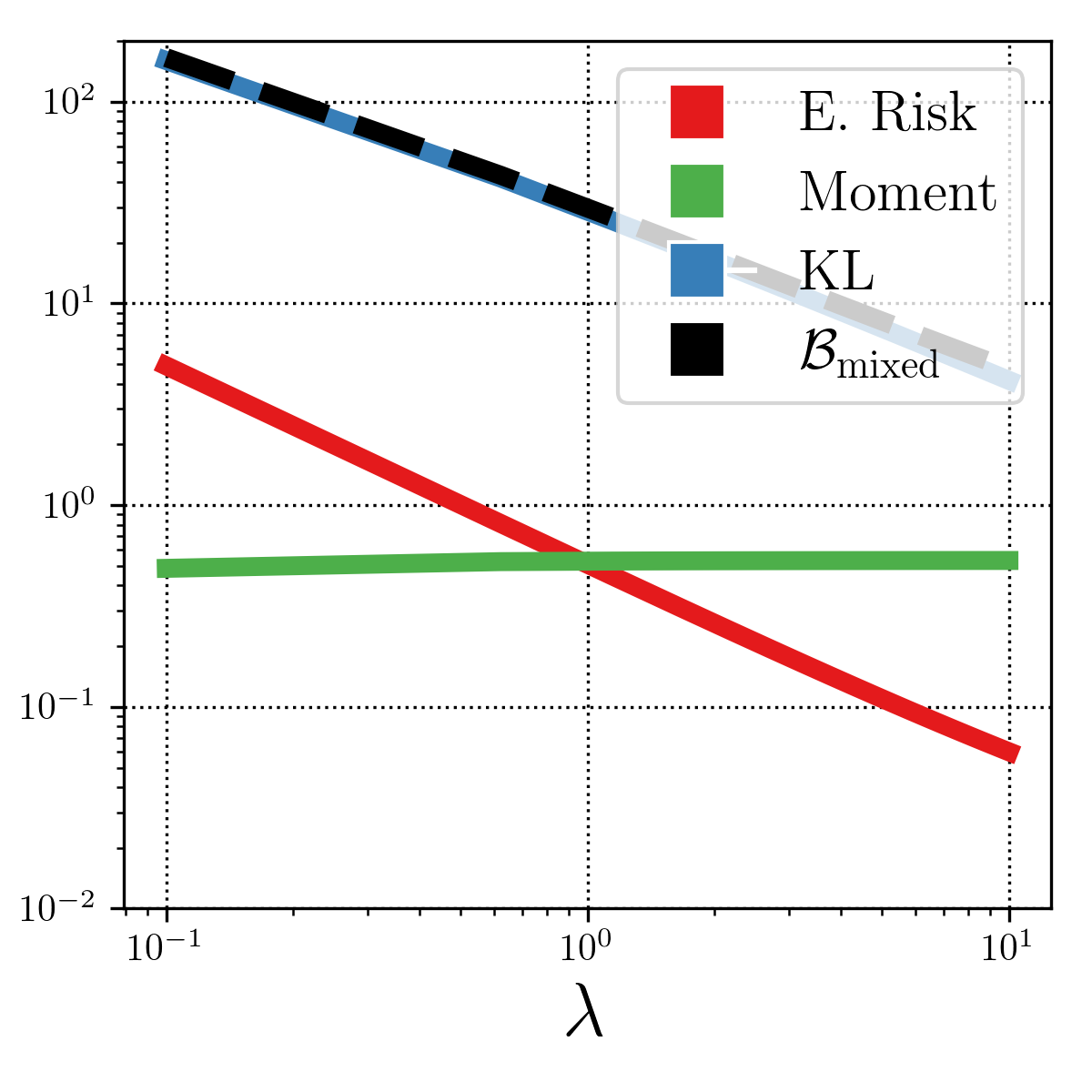}
   \label{bound_terms:figure2}
\end{subfigure}%
\begin{subfigure}{.33\textwidth}
  \centering
  \includegraphics[width=\textwidth]{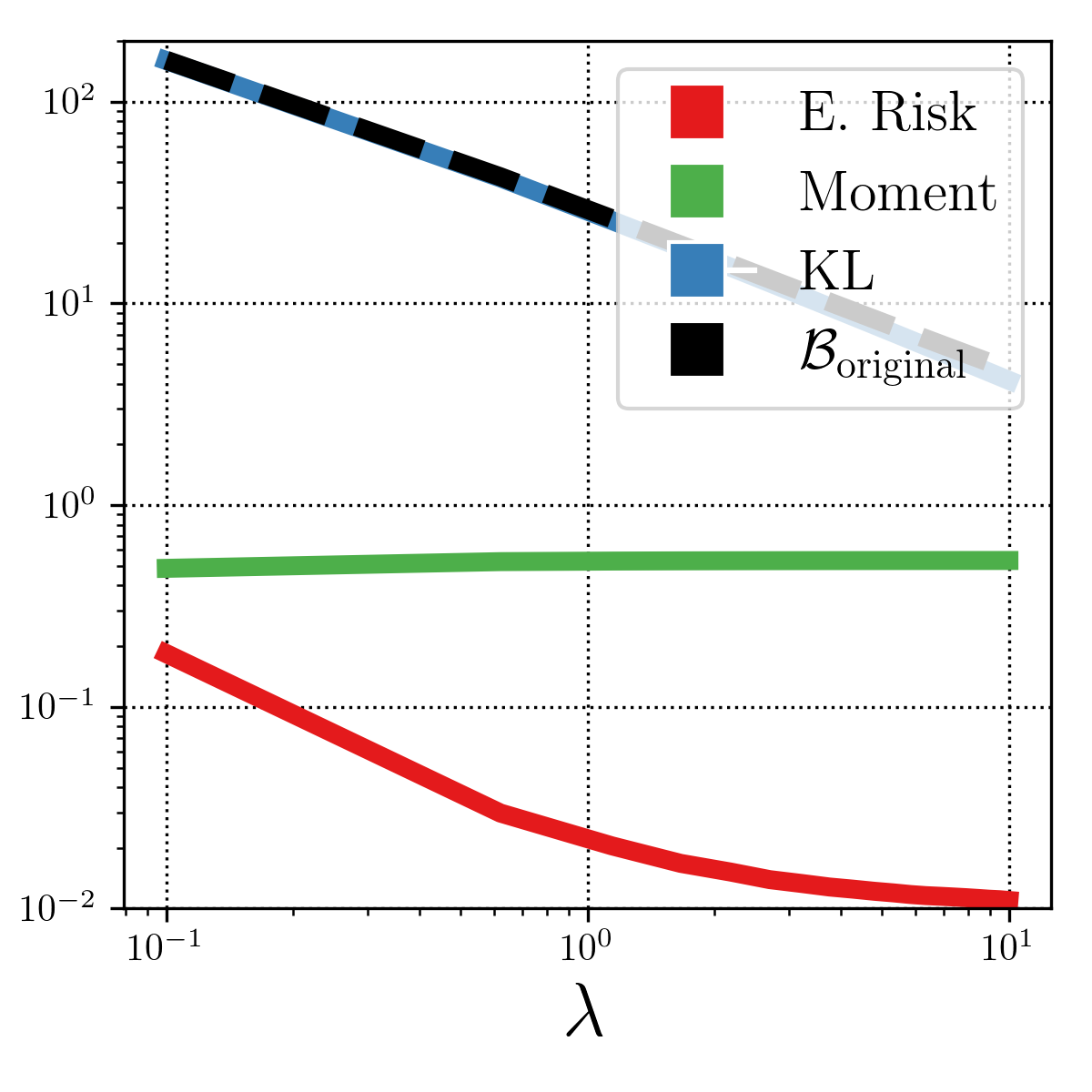}
   \label{bound_terms:figure3}
\end{subfigure}%
\caption{Diamonds}
\end{subfigure}

\begin{subfigure}{\textwidth}
\centering
\begin{subfigure}{.33\textwidth}
  \centering
  \includegraphics[width=\textwidth]{img/theory/kc_house/kc_house_approximate.png}
   \label{bound_terms:figure4}
\end{subfigure}%
\begin{subfigure}{.33\textwidth}
  \centering
  \includegraphics[width=\textwidth]{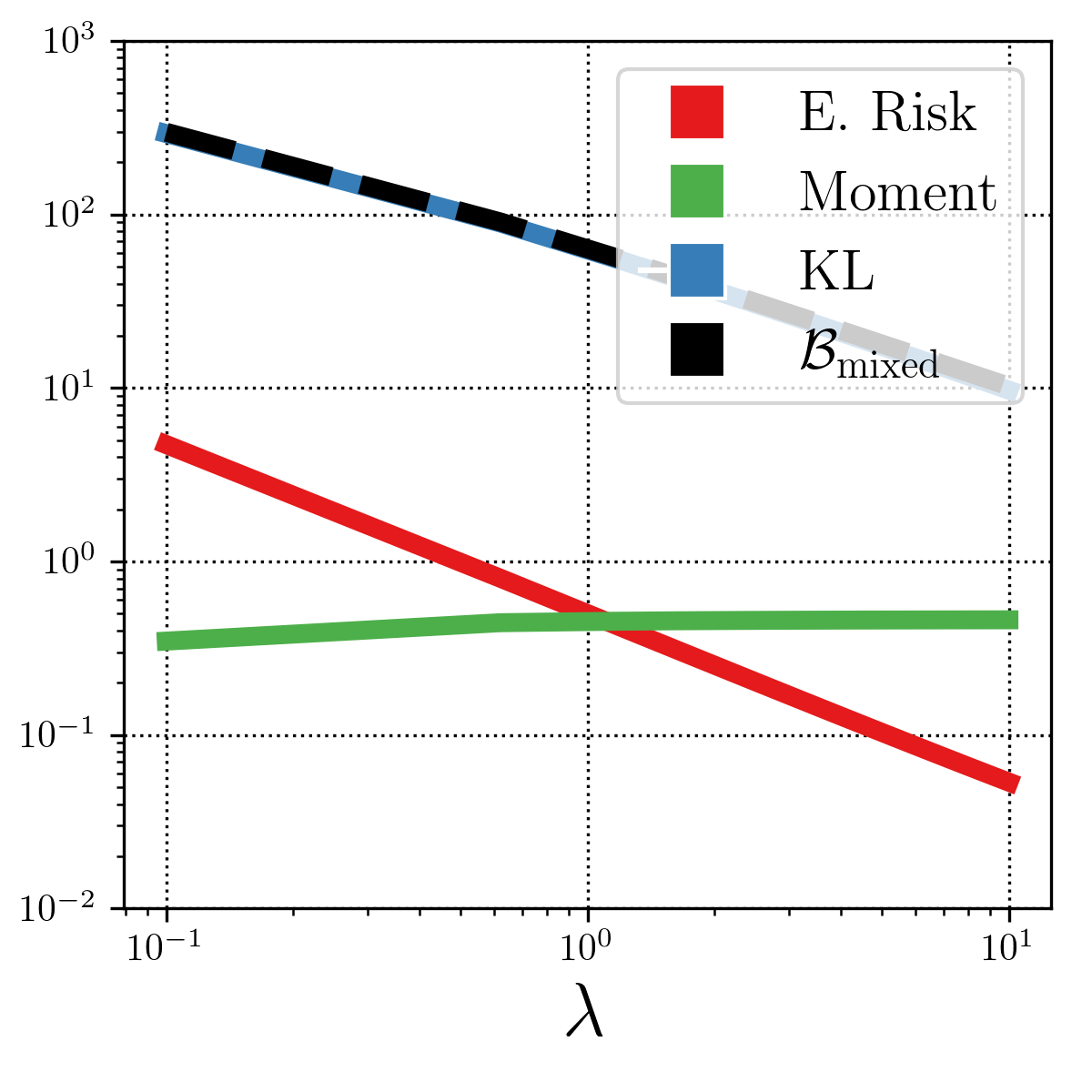}
   \label{bound_terms:figure5}
\end{subfigure}%
\begin{subfigure}{.33\textwidth}
  \centering
  \includegraphics[width=\textwidth]{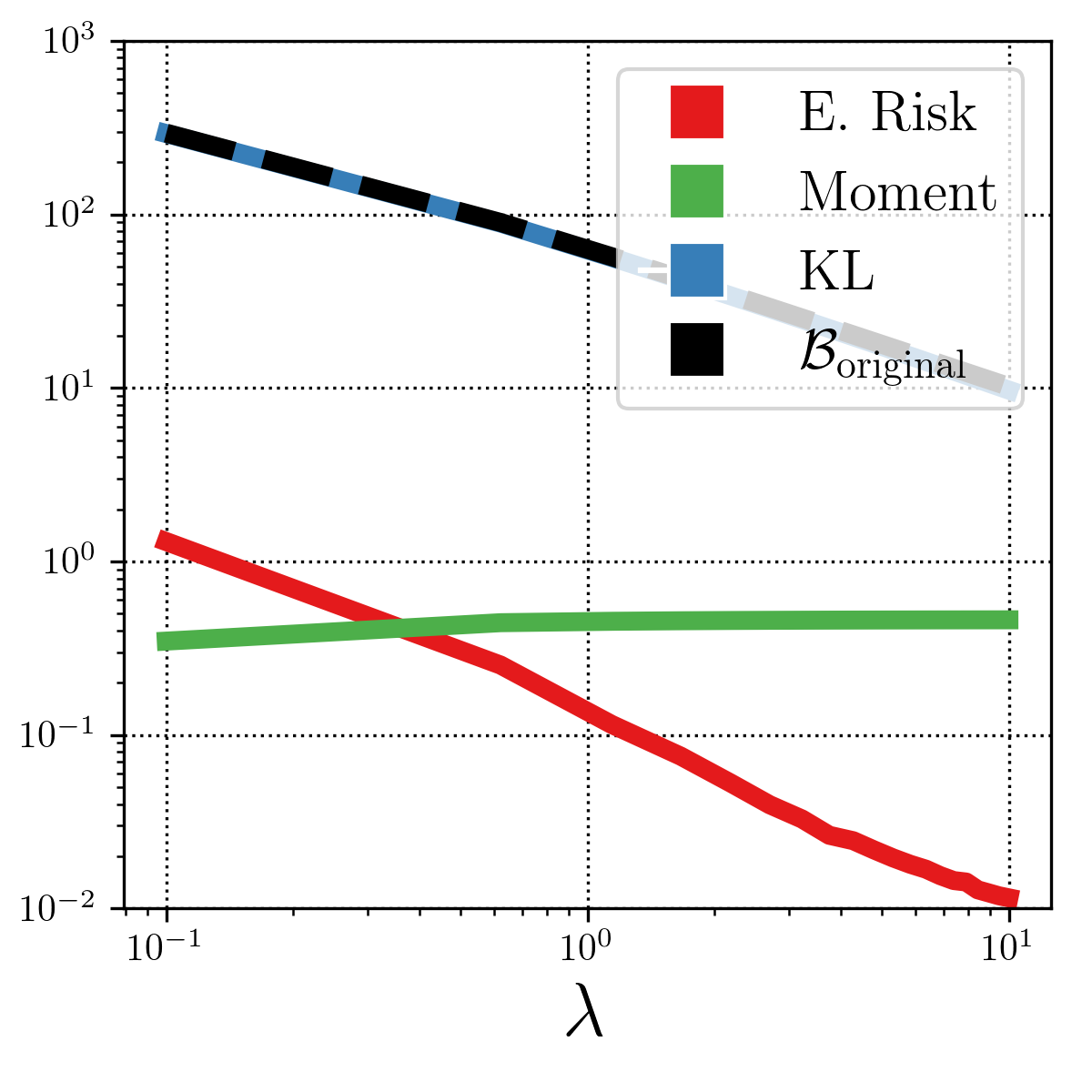}
   \label{bound_terms:figure6}
\end{subfigure}%
\caption{KC\_House}
\end{subfigure}

\begin{subfigure}{\textwidth}
\centering
\hfill 
\begin{subfigure}{.33\textwidth}
  \centering
  \includegraphics[width=\textwidth]{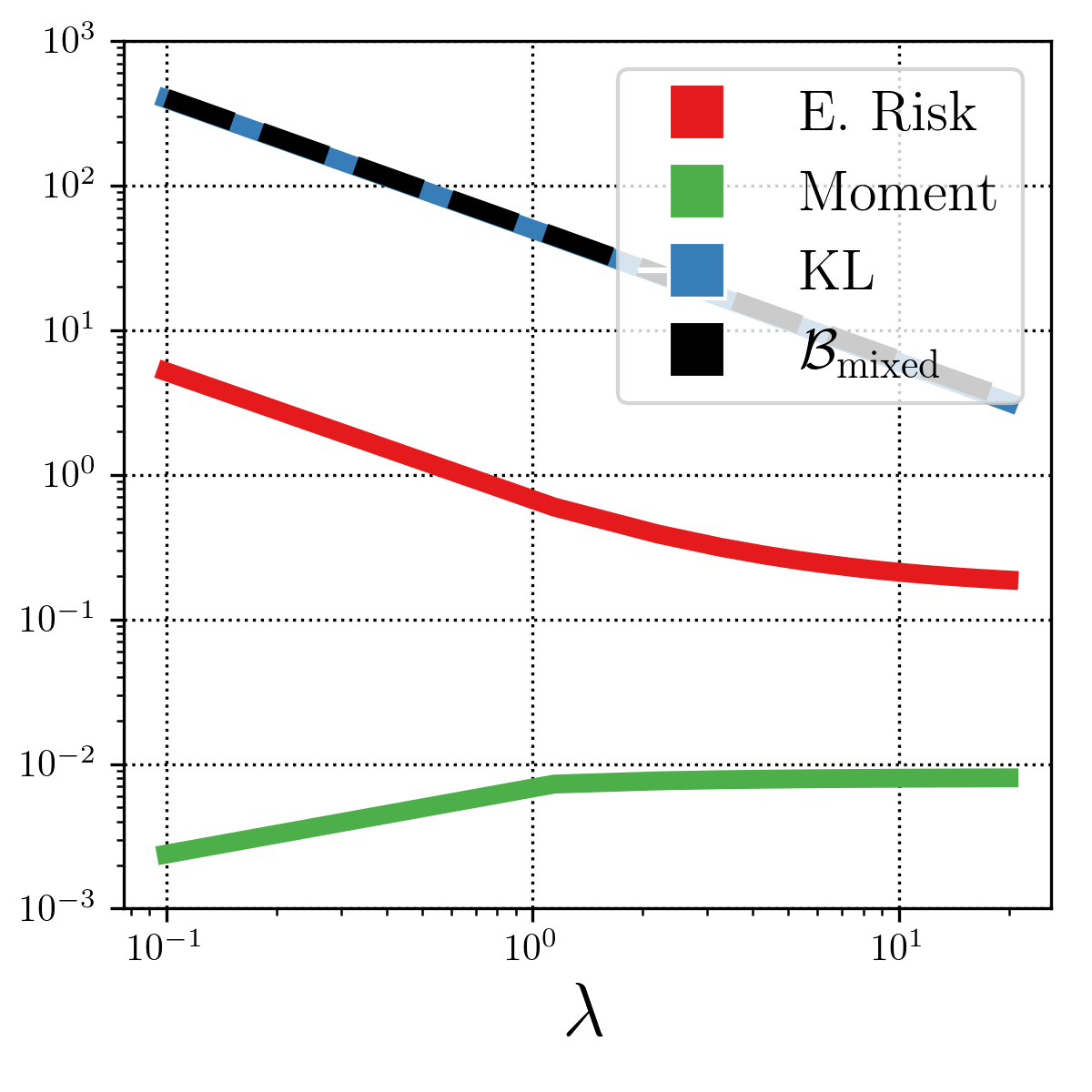}
   \label{bound_terms:figure8}
\end{subfigure}%
\begin{subfigure}{.33\textwidth}
  \centering
  \includegraphics[width=\textwidth]{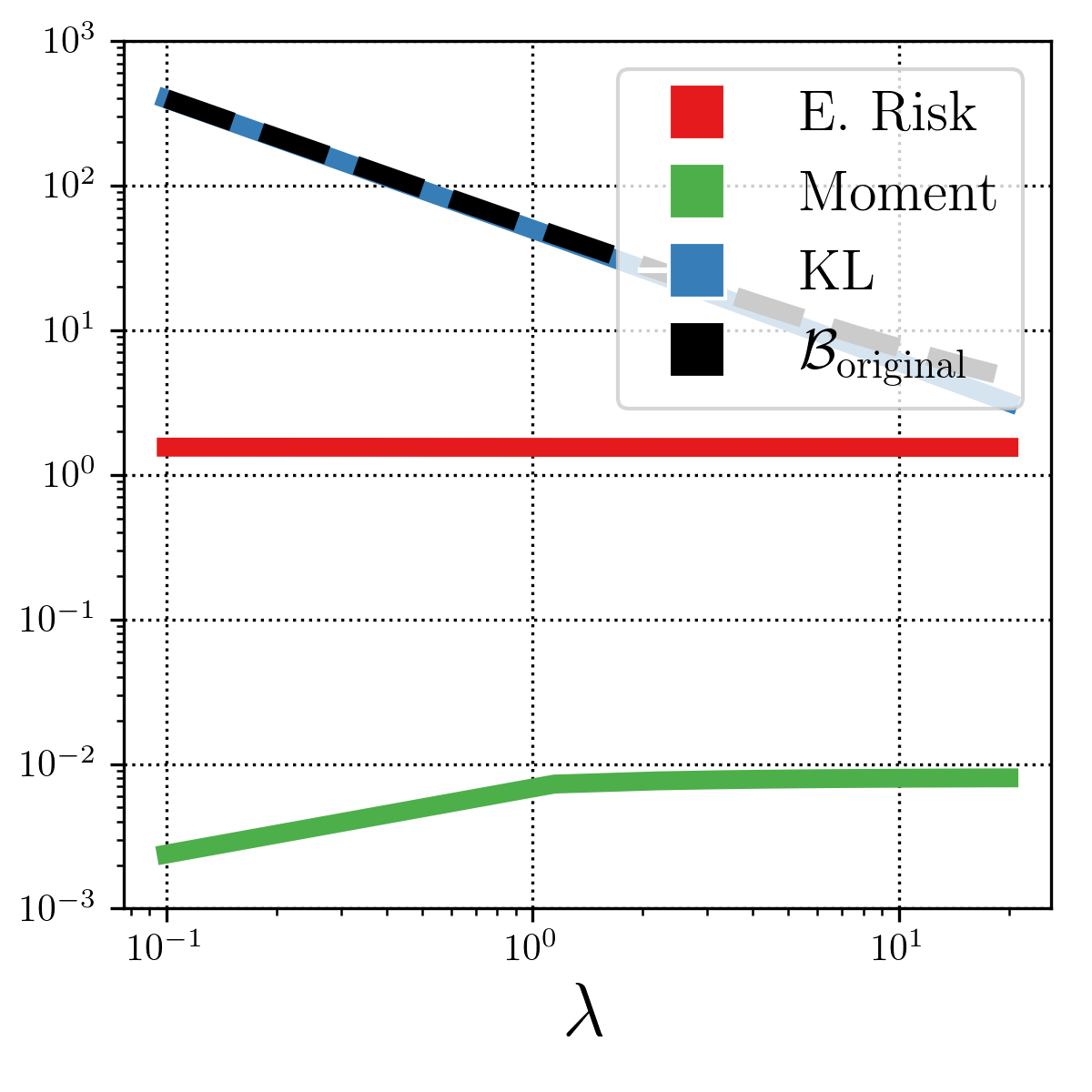}
   \label{bound_terms:figure9}
\end{subfigure}%
\caption{MNIST-10}
\end{subfigure}

  \caption{The different terms of the $\mathcal{B}_{\mathrm{approximate}}$ (left column), $\mathcal{B}_{\mathrm{mixed}}$ (middle column), and $\mathcal{B}_{\mathrm{original}}$ (right column) bounds. We see that the $\mathcal{B}_{\mathrm{approximate}}$ bound gives some useful intuition. Moving from $\mathcal{B}_{\mathrm{mixed}}$ to $\mathcal{B}_{\mathrm{original}}$ gives some improvements to the Empirical Risk values but not to the bound values which are orders of magnitude larger. }
  \label{bound_terms:figure_full}
\end{figure*}

\clearpage

\section{Proof of Theorem 1}
We include here a proof of Theorem 1, first presented in \citet{germain2016pac}, and based on \cite{alquier2016properties,begin2016pac,germain2016pac} to illustrate how the Moment term and the temperature parameter $\lambda$ arise in the final bound.
The Donsker--Varadhan's change of measure states that, for any measurable function $\phi:\mathcal{F}\rightarrow\mathbb{R}$, we have
\begin{equation*}
    \bE_{f\sim\hat{\rho}}\phi(f)\leq \mathrm{KL}(\hat{\rho}\Vert\pi)+\ln(\bE_{f\sim\pi}\exp[\phi(f)]).
\end{equation*}
Thus, with $\phi(f):=\lambda\left(\mathcal{L}_{\mathcal{D}}^{\ell}(f)-\hat{\mathcal{L}}_{X,Y}^{\ell}(f) \right)$, we obtain $\forall \hat{\rho}$ on $\mathcal{F}$:
\begin{equation*}
\begin{split}
\lambda\left(\bE_{f\sim\hat{\rho}}\mathcal{L}_{\mathcal{D}}^{\ell}(f)-\bE_{f\sim\hat{\rho}}\hat{\mathcal{L}}_{X,Y}^{\ell}(f) \right)&=\bE_{f\sim\hat{\rho}}\lambda\left(\mathcal{L}_{\mathcal{D}}^{\ell}(f)-\hat{\mathcal{L}}_{X,Y}^{\ell}(f) \right)\\
&\leq \mathrm{KL}(\hat{\rho}\Vert\pi)+\ln\left(\bE_{f\sim\pi}\exp[\lambda\left(\mathcal{L}_{\mathcal{D}}^{\ell}(f)-\hat{\mathcal{L}}_{X,Y}^{\ell}(f) \right)]\right).
\end{split}
\end{equation*}
Now, we apply Markov's inequality on the random variable $\zeta_{\pi}(X,Y):=\bE_{f\sim\pi}\exp\left[\lambda\left(\mathcal{L}_{\mathcal{D}}^{\ell}(f)-\hat{\mathcal{L}}_{X,Y}^{\ell}(f) \right)\right]$ and get
\begin{equation*}
    \mathrm{Pr}_{(X,Y)\sim\mathcal{D}^n}\left(\zeta_{\pi}(X,Y) \leq\frac{1}{\delta}\bE_{(X',Y')\sim\mathcal{D}^n}\zeta_{\pi}(X',Y') \right)\geq1-\delta.
\end{equation*}
This implies that with probability at least $1-\delta$ over the choice $(X,Y)\sim\mathcal{D}^n$, we have $\forall\hat{\rho}$ on $\mathcal{F}$
\begin{equation*}
\bE_{f\sim\hat{\rho}}\mathcal{L}_{\mathcal{D}}^{\ell}(f)\leq\bE_{f\sim\hat{\rho}}\hat{\mathcal{L}}_{X,Y}^{\ell}(f)+\frac{1}{\lambda}\left[ \mathrm{KL}(\hat{\rho}\Vert\pi)+\ln \frac{\bE_{(X',Y')\sim\mathcal{D}^n}\bE_{f\sim\pi}\exp[\lambda\left(\mathcal{L}_{\mathcal{D}}^{\ell}(f)-\hat{\mathcal{L}}_{X,Y}^{\ell}(f) \right)]}{\delta} \right].
\end{equation*}

\end{document}